\documentclass[conference]{IEEEtran}
%

\pagestyle{plain}


%

%
\usepackage{cite}

\usepackage{hyperref}       
\usepackage{url}            
\usepackage{booktabs}       
\usepackage{amsfonts}       
\usepackage{nicefrac}       
\usepackage{microtype}      
\usepackage{enumerate}
\usepackage{microtype}
\usepackage{graphicx}
\usepackage{subfigure}
\usepackage{booktabs} 
\usepackage{times,epsfig}
\usepackage{epstopdf}
\usepackage{bm}
\usepackage{amssymb,amsmath,amsthm}
\usepackage{threeparttable}
\usepackage{array}
 \usepackage{multirow}
\usepackage{threeparttable}
\newtheorem{definition}{Definition}

\newtheorem{theorem}{Theorem}

\usepackage[table,xcdraw]{xcolor}

%
\ifCLASSINFOpdf
\else
\fi
%
%


\usepackage{algorithmic}
\usepackage{array}
\usepackage{subfigure}

\hyphenation{op-tical net-works semi-conduc-tor}

\begin{document}
%
\title{An Accuracy-Lossless Perturbation Method for Defending Privacy Attacks in Federated Learning}


\author{Xue~Yang, Yan~Feng, Weijun~Fang, Jun~Shao,\\
        Xiaohu~Tang,~\IEEEmembership{Member,~IEEE,}
        Shutao~Xia,~\IEEEmembership{Member,~IEEE,}
        Rongxing~Lu,~\IEEEmembership{Fellow,~IEEE}
\thanks{X. Yang, Y. Feng, W. Fang and S. Xia are with Tsinghua Shenzhen International Graduate School, Tsinghua University, and also with the PCL Research Center of Networks and Communications, Peng Cheng Laboratory, Shenzhen, 518055, China.}
\thanks{J. Shao is with the School of Computer and Information Engineering, Zhejiang Gongshang University, Zhejiang, 310018, China.}
\thanks{X. Tang is with the Information Security and National Computing Grid Laboratory, Southwest Jiaotong University, Chengdu, 610031, China.}
\thanks{R. Lu is with the Canadian Institute of Cybersecurity, Faculty of Computer Science, University of New Brunswick, Fredericton, Canada, E3B 5A3.}
\thanks{Corresponding author: W. Fang (e-mail: nankaifwj@163.com)}}


%


\IEEEoverridecommandlockouts
\makeatletter\def\@IEEEpubidpullup{6.5\baselineskip}\makeatother
\IEEEpubid{\parbox{\columnwidth}{
    Network and Distributed Systems Security (NDSS) Symposium 2021\\
    21-24 February 2021\\
    ISBN 1-891562-66-5\\
    https://dx.doi.org/10.14722/ndss.2021.23xxx\\
    www.ndss-symposium.org
}
\hspace{\columnsep}\makebox[\columnwidth]{}}

\maketitle

\begin{abstract}
Although federated learning improves privacy of training data by exchanging local gradients or parameters rather than raw data, the adversary still can leverage local gradients and parameters to obtain local training data by launching reconstruction and membership inference attacks. To defend such privacy attacks, many noises perturbation methods (like differential privacy or CountSketch matrix) have been widely designed. However, the strong defence ability and high learning accuracy of these schemes cannot be ensured at the same time, which will impede the wide application of FL in practice (especially for medical or financial institutions that require both high accuracy and strong privacy guarantee). To overcome this issue, in this paper, we propose  \emph{an efficient model perturbation method for federated learning} to defend reconstruction and membership inference attacks launched by curious clients. On the one hand, similar to the differential privacy, our method also selects random numbers as perturbed noises added to the global model parameters, and thus it is very efficient and easy to be integrated in practice. Meanwhile, the random selected noises are positive real numbers and the corresponding value can be arbitrarily large, and thus the strong defence ability can be ensured. On the other hand, unlike differential privacy or other perturbation methods that cannot eliminate the added noises, our method allows the server to recover the true gradients by eliminating the added noises. Therefore, our method does not hinder learning accuracy at all.  Extensive experiments demonstrate that for both regression and classification tasks, our method achieves the same accuracy as non-private approaches and outperforms the state-of-the-art related schemes. Besides, the defence ability of our method is significantly better than the state-of-the-art related defence schemes. Specifically, for the membership inference attack, our method achieves attack success rate (ASR) of around $50\%$, which is equivalent to blind guessing. However, the ASR of other defence methods is around $60\%$, which  means  that  clients  have  a  certain  advantage  to attack successfully compared with blind guessing. For the reconstruction attack, the ASR of our method is around $10\%$ for 10-classes datasets, which is equivalent to blind guessing (i.e., the optimal defensive ability). However, the ASR of other defensive methods is around $50\%$, which demonstrates a relative poor defensive ability.
\end{abstract}


%

\section{Introduction}
\IEEEPARstart{W}{ith} the continued emergence of privacy breaches and data abuse \cite{Wikipedia18}, data privacy and security issues gradually impede the flourishing development of deep learning \cite{YangLCT19}. In order to mitigate such privacy concerns, \emph{federated learning} (FL) \cite{McMahanMRHA17} has recently been presented as an appealing solution. As illustrated in Fig. \ref{fig:FL}, FL is essentially a distributed learning framework where many clients collaboratively train a shared global model under the orchestration of a central server, while ensuring that each client's raw data is stored locally and not exchanged or transferred. 

\begin{figure}[!t]
\centering
\includegraphics[width=3.5in]{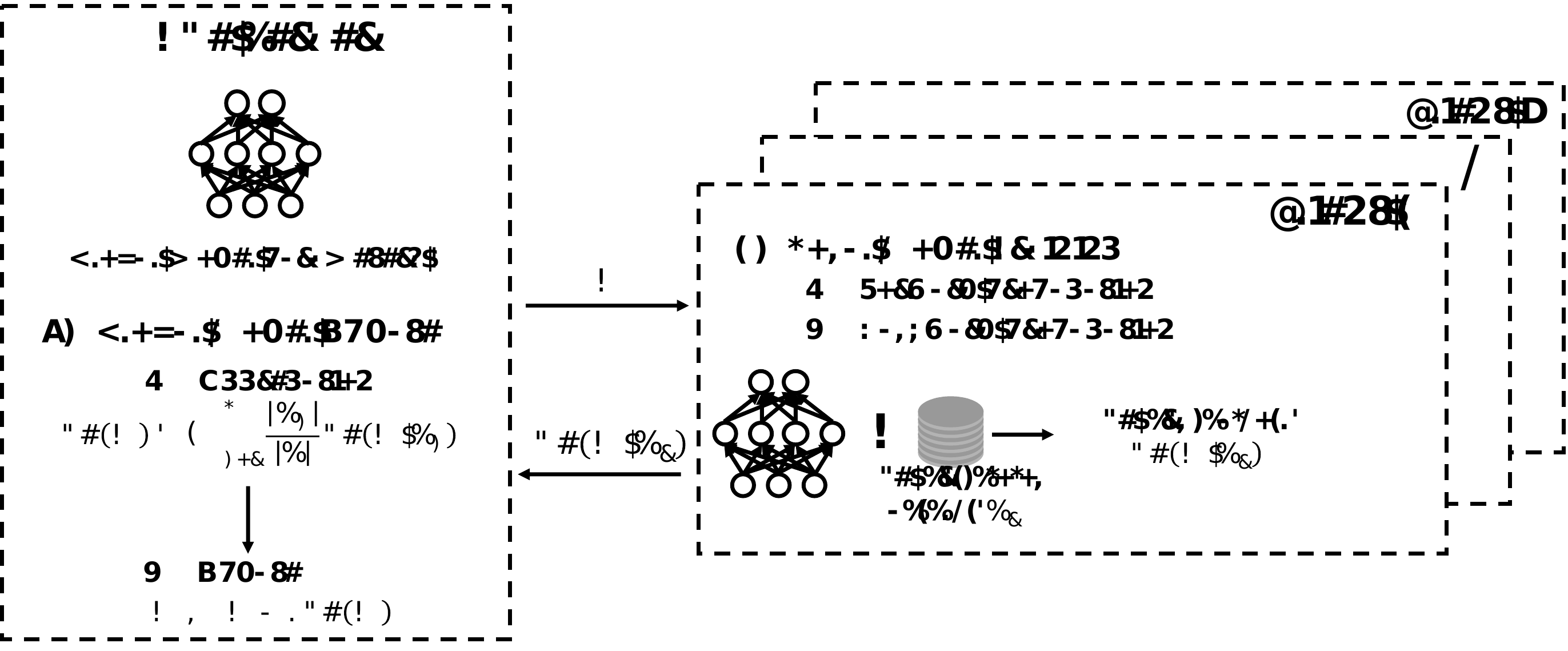}
\caption{The overall framework of federated learning. During the phase of deep model training, clients run stochastic gradient descent (SGD) algorithm on local training data to compute local gradients and upload them. After receiving local gradients from clients, the server aggregates them, updates the current global model parameters, and then distributes the updates global model parameters to clients for the next iteration.}
\label{fig:FL}
\end{figure}

As we all know,  data is the huge digital wealth in recent big data era, and thus all clients (especially for companies) wants to obtain as much training data as possible. And if clients own enough training data, they can train an effective model independently, instead of joining the Fl to share the trained global model. This can totally protect privacy and save huge overheads, especially for communication overhead. FL seemingly can protect the training data of clients by exchanging local gradients and current model parameters rather than raw data, however, sharing local gradients and model parameters is still a well-known privacy risk \cite{ZhuLH19, NasrSH19}. Specifically, inference attacks \cite{NasrSH19} has been widely adopted to reconstruct or infer the property of clients' training data based on the model output. Inference attacks on FL mainly includes two attacks: tracing (a.k.a. membership inference) attacks, and reconstruction attacks. The goal of \emph{reconstruction attacks} is to recover original training data as accurate as possible. The goal of \emph{membership inference attacks} is to infer if a particular individual data record was included in
the training dataset. This is a decisional problem, and its accuracy directly demonstrates the leakage of the model about its training data. Note that we mainly focus on the privacy attacks that will leak the training data of clients, and thus other attacks are beyond the scope of this paper.

To defense inference attacks launched by curious clients, many privacy-preserving federated learning schemes \cite{McMahanRT018, PathakRR10, WeiLDMYFJQP20} widely adopt the \emph{differential privacy} technique \cite{Dwork06} due to its theoretical guarantee of privacy protection and low computational and communication complexity. However, there is a trade-off between privacy guarantee and learning accuracy. If the added noises is strong, the differential privacy works for defence, but the noise inevitably hurts the accuracy and may even stop FL from making progress \cite{HitajAP17}. If the noise is not strong enough, the defence may face failure. In order to overcome such drawback, many related works \cite{FanNJZLCY20, Wangshusen2019, ShokriS15} have been designed. For example, the work in \cite{ShokriS15} has suggested to randomly select and share a small fraction of gradient elements (those with large magnitudes) to reduce privacy loss. Fan et. al. \cite{FanNJZLCY20} have leveraged element-wise adaptive gradients perturbations to defeat reconstruction attacks and maintain high model accuracy. In \cite{Wangshusen2019}, a CountSketch matrix is used to perturb global model parameters for defending reconstruction and membership inference attacks. 

Unfortunately, the defence ability and learning accuracy of these schemes are still unsatisfactory, especially for medical or financial institutions that require both high accuracy and strong privacy guarantee. The main reason is that the perturbed noises added in gradients or model parameters cannot be eliminated, which will inevitably reduce the learning accuracy compared with the training without perturbation. Meanwhile, the defence ability is not strong enough due to the utility consideration in practice (i.e., It should avoid reducing the learning accuracy too much to make the trained model unusable). Therefore, how to design an efficient method in FL that can ensure high defence ability for inference attacks without sacrificing learning accuracy is still a big challenge \cite{abs-1912-04977, LiSTS20}. 

In this paper, we aim to efficiently address such challenge and propose \emph{an efficient model perturbation method for federated learning to defend inference attacks launched by curious clients and ensure high learning accuracy at the same time}. The main novelty and contributions are two-folds:
\begin{itemize}
    \item We propose an efficient model perturbation method to prevent clients obtaining true global model parameters and local gradients, and thus defenses against reconstruction and membership inference attacks \cite{NasrSH19}. On the one hand, similar to the differential privacy, our method also selects random numbers as noises added to the global model parameters, and thus it is very efficient and easy to be integrated. Note that the randomly selected noises are positive real number  and the corresponding value can be arbitrarily large, which shows a strong defensive ability. On the other hand, unlike differential privacy \cite{ShokriS15} or other perturbation methods \cite{FanNJZLCY20, Wangshusen2019} that cannot eliminate the added noises, our method ensures that the server can recover the true gradients by eliminating the added noises. Therefore, our method does not hinder learning accuracy at all.
    \item In order to evaluate our method from \emph{learning accuracy} and \emph{inference attacks defence}, we conduct extensive experiments on several large-sale datasets, and compare our method with several the state-of-the-art related schemes. Empirical results show that our scheme can achieve the same learning accuracy as the scheme that trains model with true parameters (i.e., FedAvg \cite{McMahanMRHA17}), which demonstrates that our method does not harm learning accuracy. Meanwhile, both learning accuracy and defence ability of our method are better than the related defence schemes \cite{ShokriS15, FanNJZLCY20, Wangshusen2019}. More specifically, for the membership inference attack, our method achieves attack success rate of around $50\%$, which is equivalent to blind guessing. However, the attack success rate of other defence methods \cite{ShokriS15, FanNJZLCY20, Wangshusen2019} is around $60\%$, which  means  that  clients  have  a  certain  advantage  to attack successfully compared with blind guessing. For the reconstruction attack, the attack success rate of our method is around $10\%$ for 10-classes datasets, which is equivalent to blind guessing (i.e., the optimal defensive ability). However, the attack success rates of other defensive methods \cite{ShokriS15, FanNJZLCY20, Wangshusen2019} are around $50\%$, which demonstrates a relative poor defensive ability.
\end{itemize}
Note that we admit that our method alone does not fundamentally defend all the attacks launched from curious clients. As claimed in \cite{Wangshusen2019}, there does not exist any defense that is effective for all the attacks that infer privacy. Meanwhile, similar to \cite{ShokriS15, FanNJZLCY20, Wangshusen2019}, we focus on the inference attacks launched by curious clients and do not consider the attacks launched by the server. Actually, we can assume the server is honest and clients are honest-but-curious. To the best of our knowledge, this assumption is widespread in the real word. For example, a insurance headquarter requires all its subsidiaries to join the FL to train a global model for analyzing the insurance behavior of customers. Obviously, the headquarter is trusted by all subsidiaries, but the competing relationships between subsidiaries and they distrust each other, so it should be ensured that none of subsidiaries can obtain the private data of other subsidiaries. If we want to consider both the server and clients are honest-but-curious, our method can be easily incorporated with existing methods such as homomorphic encryption, secret sharing or secure multi-party computing to satisfy the privacy-preserving requirements.

\noindent \textbf{Roadmap} In Section \ref{sec:pre}, we outline preliminaries. In Section \ref{sec:model}, we state the threat model and design goals. We present the proposed scheme in Section \ref{sec:ProSche:MLP}, followed by the theoretical analysis about security in Section \ref{sec:theo}. Experimental results are presented in Section \ref{sec:performance}. Finally we discuss related work in Section \ref{sec:related} and provide concluding remarks in Section \ref{sec:conclusion}.


\section{Preliminaries}\label{sec:pre}
In this section, we outline concepts of the FL and the Hadamard product, which will serve as the basis of our scheme.

\subsection{Federated learning}\label{subsec:pre_FL}
Federated leaning (FL) \cite{abs-1912-04977} is essentially a distributed machine learning framework where a number of clients collaboratively train a shared global model with high accuracy. In this paper, we focus on training Deep Neural Networks (DNNs), which have been widely adopted as the infrastructure of deep learning models to solve many complex tasks \cite{Schmidhuber15}. 

Formally, consider a cross-silo FL with $K$ clients, denoted as $\mathcal{C}=\{\mathcal{C}_{1}, \mathcal{C}_{2}, \ldots, \mathcal{C}_{K}\}$, and each client $\mathcal{C}_{k}$ has the local training dataset $\mathcal{D}_{k}$. The cross-silo FL aims to solve an optimization problem to obtain the optimal global parameter $W$ \cite{McMahanMRHA17, LiHYWZ20}:
\begin{equation}\label{eq:FL}
\min_{W}  F(W) \triangleq \sum_{k=1}^{K}\frac{|\mathcal{D}_{k}|}{|\mathcal{D}|}F(W, \mathcal{D}_{k}),
\end{equation}
where $|\mathcal{D}_{k}|$ is the sample size of the client $\mathcal{C}_{k}$ and $|\mathcal{D}|=\sum^{K}_{k=1}|\mathcal{D}_{k}|$. $F(W, \mathcal{D}_{k})$ is the local object of $\mathcal{C}_{k}$ defined by
\begin{equation}\label{eq:LocalFL}
F(W, \mathcal{D}_{k})\triangleq \frac{1}{|\mathcal{D}_{k}|}\sum_{(\bm{x}_{i}, \bm{\bar{y}}_{i})\in \mathcal{D}_{k}}\mathcal{L}\left(W; (\bm{x}_{i}, \bm{\bar{y}}_{i})\right),
\end{equation}
where $(\bm{x}_{i}, \bm{\bar{y}}_{i})$ is the training sample, $\bm{x}_{i}$ and $\bm{\bar{y}}_{i}$ are the corresponding feature vector and the ground-truth label vector, respectively.
$\mathcal{L}\left(\cdot; \cdot\right)$ is a user-specified loss function, such as the \emph{Mean Squared Error} and \emph{Cross-entropy}. 

In order to find the optimal parameters for the Eq. \eqref{eq:FL}, the server initializes the global model parameter, and then the server and all clients collaboratively perform the DNN training, which mainly includes three phases 1) \emph{global model broadcasting}, 2) \emph{local model training}, and 3) \emph{global model update}. 

\noindent\textbf{1) Global model broadcasting.} The server broadcasts the current global model parameter $W=\{W^{(l)}\}^{L}_{l=1}$ to all clients for local model training.  

\noindent\textbf{2) Local model training.} Once receiving the global model parameter $W=\{W^{(l)}\}^{L}_{l=1}$, each client $\mathcal{C}_{k}$ computes local gradients $\nabla F(W, \mathcal{D}_{k})$ by running the stochastic gradient descent (SGD) algorithm on the local training dataset $\mathcal{D}_{k}$. Generally, for each training sample $(\bm{x}, \bm{\bar{y}}) \in \mathcal{D}_{k}$, the training process includes two steps: \emph{forward propagation} and \emph{backward propagation}. 
\begin{itemize}
\item \textbf{Forward propagation} is the process of obtaining prediction through sequentially calculating under $W$ after the input samples enter the network. Specifically, for the sample feature vector $\bm{x}$, the output vector $\bm y^{(l)}=(y^{(l)}_{1},y^{(l)}_{2},\ldots,y^{(l)}_{n_{l}})\in \mathbb{R}^{n_l}$ of the $l$-th layer is computed as
\begin{equation}\label{eq:forward0}
            \bm y^{(l)}=\left\{
            \begin{aligned}
                & \mathrm{ReLU}\left(W^{(l)}\bm y^{(l-1)} \right), \textnormal{for } 1\leq l\leq L-1, \\
                & W^{(l)}\bm y^{(l-1)} ,~~~~~~~~~~~~ \textnormal{for } l=L, 
            \end{aligned}\right.
        \end{equation}
where $\bm{y}^{(0)}=\bm{x}$ and $\mathrm{ReLU}(\cdot)$ is the widely used activation function satisfying that: for any input $x$,
        \begin{equation*}
            y=\mathrm{ReLU}(x)=\left\{
            \begin{aligned}
                &x, \textnormal{ if } x>0, \\
                &0,\textnormal{ if }x\leq0,
            \end{aligned}\right.
        \end{equation*}
\item \textbf{Backward propagation} starts from the loss value, and updates the parameter values of the network in reverse, so that the loss value of the updated network decreases. Let the gradient of the loss respect to $W^{(l)}$ be $\frac{\partial \mathcal{L}\left(W;(\bm{x}, \bm{\bar{y}})\right)}{\partial W^{(l)}} \in \mathbb{R}^{n_{l}\times n_{l-1}}$. Based on the chain rule, the $(i,j)$-th entry of $\frac{\partial \mathcal{L}\left(W;(\bm{x}, \bm{\bar{y}})\right)}{\partial W^{(l)}}$ is computed as:
      \begin{small}
      \begin{equation*}
         \frac{\partial \mathcal{L}\left(W;(\bm{x}, \bm{\bar{y}})\right)}{\partial w^{(l)}_{ij}} =\frac{\partial \mathcal{L}\left(W;(\bm{x}, \bm{\bar{y}})\right)}{\partial \bm{y}^{(L)}}\frac{\partial \bm{y}^{(L)}}{\partial \bm{y}^{(L-1)}}\cdots \frac{\partial \bm{y}^{(l+1)}}{\partial y^{(l)}_{i}}\frac{\partial y^{(l)}_{i}}{\partial w^{(l)}_{ij}},
     \end{equation*}
     \end{small}
where $i=1,2,\ldots, n_{l}$ and $j=1,2,\ldots, n_{l-1}$. According to Eq. \eqref{eq:forward0}, for $1\leq l\leq L-1$, the corresponding output $\bm y^{(l)} \in \mathbb{R}^{n_{l}}$ is influenced by the non-linear ReLU function, and thus the corresponding gradient $\frac{\partial \mathcal{L}\left(W;(\bm{x}, \bm{\bar{y}})\right)}{\partial W^{(l)}} \in \mathbb{R}^{n_{l}\times n_{l-1}}$ will also be affected by ReLU.
\end{itemize}
Similarly, for every sample ($\bm{x}_{i}, \bm{\bar{y}}_{i})\in \mathcal{D}_{k}$, $\mathcal{C}_{k}$ can obtain the corresponding local gradient $\frac{\partial \mathcal{L}(W; (\bm{x}_{i}, \bm{\bar{y}}_{i}))}{\partial W}$, and then computes the average local gradient as: 
\begin{equation}\label{eq:averlocal}
   \nabla F\left(W, \mathcal{D}_{k}\right)=\frac{1}{|\mathcal{D}_{k}|}\sum_{(\bm{x}_{i}, \bm{\bar{y}}_{i})\in \mathcal{D}_{k}}\frac{\partial \mathcal{L}(W; (\bm{x}_{i}, \bm{\bar{y}}_{i}))}{\partial W}.
\end{equation}
Finally, $\mathcal{C}_{k}$ uploads $\nabla F(W, \mathcal{D}_{k})$ to the server for model update.

\noindent\textbf{3) Global model update.} After receiving $\nabla F(W, \mathcal{D}_{k})$ from all clients, the server aggregates and updates the current model parameter $W$ for the next iteration. Specifically, given the learning rate $\eta$, the updated parameter is computed as
\begin{equation}\label{eq:update}
 W\Leftarrow W-\eta\sum_{k=1}^{K}\frac{|\mathcal{D}_{k}|}{|\mathcal{D}|}\nabla F(W, \mathcal{D}_{k}).
 \end{equation}
After that, the server distributes the updated parameter $W$  to all clients for the next iteration.

\subsection{Hadamard Product}\label{subsec:hadamard}
 The Hadamard product \cite{HJ2012} takes two matrices of the same dimensions and produces another matrix of the same dimension as the operands.
 \begin{definition}\label{hadamard}
 For two matrices $A$ and $B$ of the same dimension $m\times n$, the Hadamard product $A\circ B$ (or $A \odot B$)  is a matrix of the same dimension as the operands, with elements given by
 \[(A \circ B)_{ij}=(A \odot B)_{ij}=A_{ij}B_{ij}\]
 \end{definition}
Two properties of Hadamard product are given as follows:
\begin{itemize}
  \item For any two matrices $A$ and $B$, and a diagonal matrix $D$, we have
  \begin{equation}\label{eq:hadamard0}
  D(A \circ B)=(DA)\circ B \textnormal{ and } (A \circ B)D=(AD)\circ B.
\end{equation}
  \item For any two column vectors $\bm a$ and $\bm b$, the Hadamard product is $\bm{a} \circ \bm{ b}=D_{\bm a} \bm b$,
  where $D_{\bm a}$ is the corresponding diagonal matrix with the vector $\bm a$ as its main diagonal.
\end{itemize}

\section{Threat model and design goals}\label{sec:model}
In this paper, we mainly focus on the privacy attacks launched by clients. Similar to \cite{FanNJZLCY20, Wangshusen2019}, we assume clients are honest-but-curious, which means that all clients honestly follow the underlying scheme and does not submit any malformed messages, but attempt to infer other clients' data privacy by launching privacy attacks. As introduced in Section \ref{subsec:pre_FL}, during each iteration, each client $\mathcal{C}_{k}$ can obtain the updated model parameter $W$ and compute the sum of other clients' gradients denoted as $\sum_{i\neq k}\nabla F(W, \mathcal{D}_{k})$. Knowing the model parameters, gradients or both, curious  clients may try to infer data privacy of other clients. Similar to \cite{FanNJZLCY20, Wangshusen2019}, we mainly focus on two state-of-the-art inference attacks in federated learning called \textbf{reconstruction attack} and \textbf{membership inference attack}. In the \textbf{reconstruction attack}, the curious client tries to reconstruct sensitive features of the records  in  the training set of other clients. In the \textbf{membership inference attack}, the curious client attempts to infer if a certain data record is included in the training dataset with the model parameters. This is a decisional problem, and its accuracy directly demonstrates the leakage of the model about the corresponding training data. Note that, since the privacy preservation is our focus, other active attacks (like poisoning attacks or backdoor attacks) that will destroy the integrity or availability are beyond the scope of this work and will be discussed in the future.

\noindent \textbf{Design goals.} In order to defense privacy attacks launched by honest-but-curious clients and ensure the utility of training model, the design goals of our scheme mainly include the following two aspects:
\begin{itemize}
  \item \textbf{Confidentiality}: The proposed scheme should ensure the curious clients cannot infer private training data of other clients by reconstruct and membership inference attacks with exchanged global model parameters and local gradients. In particular, similar to \cite{Wangshusen2019}, the goal of our scheme is to ensure that clients can only know the random perturbed model parameters rather than true model parameters. 
 
  \item \textbf{High accuracy}: As demonstrated in \cite{abs-1912-04977, FanNJZLCY20}, many state-of-the-art differential privacy-based approaches strengthen the privacy preservation capability in FL at the expense of accuracy, which will hinder the application of FL in practice, especially for medical or financial institutions that require high model accuracy. Therefore, in addition to privacy-preservation, ensuring high model accuracy is also our design goal. More specifically, it is better for our scheme to achieve the same model accuracy as the plain training model, e.g., FedAvg \cite{McMahanMRHA17}.
\end{itemize}

\section{Our proposed Method}\label{sec:ProSche:MLP}
In this section, we describe our proposed privacy-preserving deep model training with ReLU non-linear activation in details, which can be easily applied on the state-of-the-art models such as Convolutional Neural Networks (CNN) \cite{LinRM18} as well as ResNet \cite{HeZRS16} and DenseNet \cite{HuangLMW17}. According to Section \ref{subsec:pre_FL}, the overall framework of our proposed method, as shown in Fig. \ref{fig_framework}, mainly includes the three steps: global model perturbation, local model training and global model update.
\begin{figure}[!t]
\centering
\includegraphics[width=3.5in]{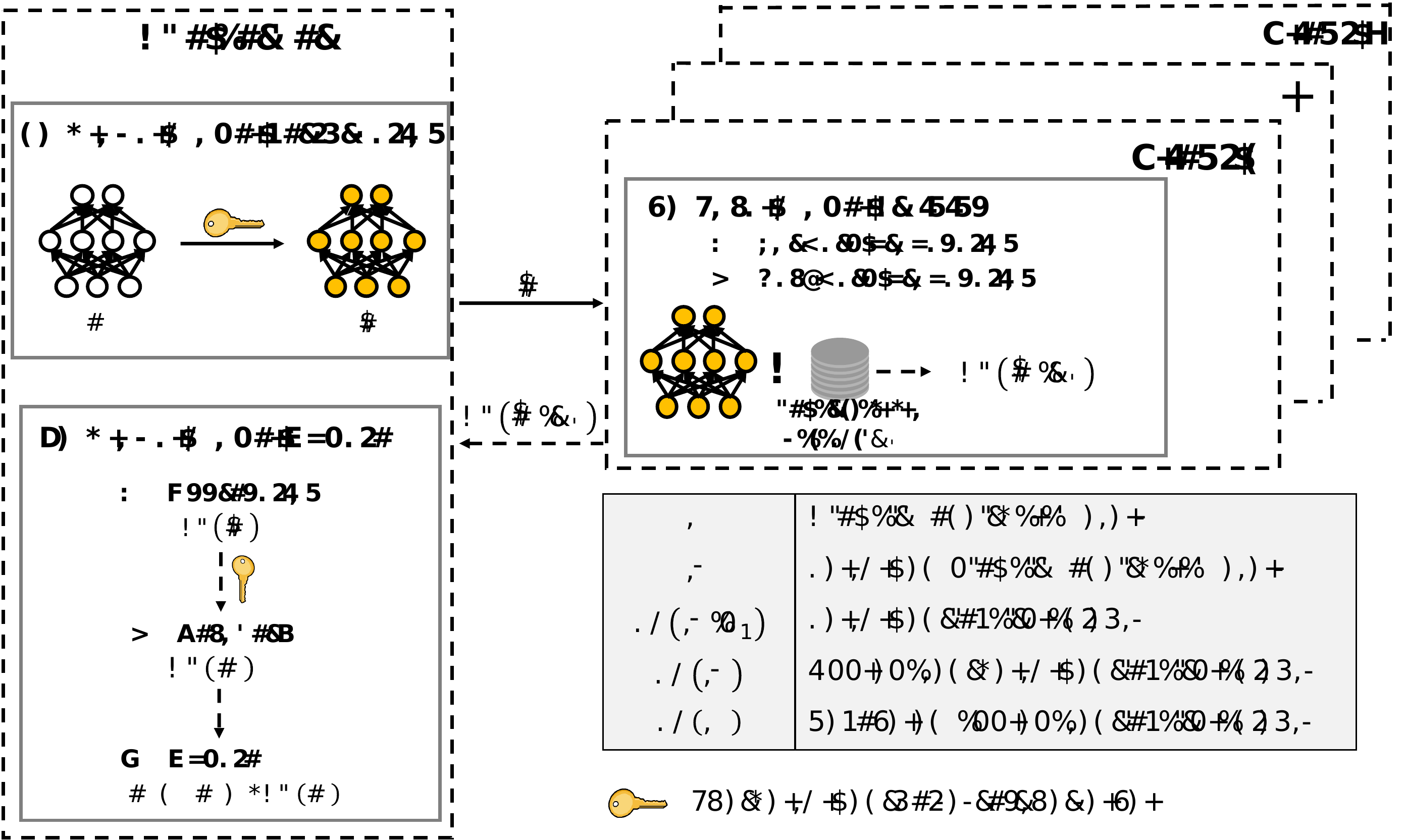}
\caption{The overall framework of our proposed method.}
\label{fig_framework}
\end{figure}

\subsection{Global Model Perturbation}\label{MLP:subsec:ParaPerturb}
The goal of \emph{global model perturbation} is to prevent curious clients from obtaining the true model parameters $W=\{W^{(l)}\}^{L}_{l=1}$ and gradient $\nabla F(W, \mathcal{D}_{k})$. Specifically, the server first needs to select random one-time-used noises for different iterations, and then perturbs global parameters for privacy preservation.
\begin{itemize}
  \item \textbf{Random noises selection:} The server selects random one-time-used noises  for different iterations as follows.
  \begin{itemize}
      \item Randomly select multiplicative noisy vectors $\bm r^{(l)}=(r^{(l)}_{1},r^{(l)}_{2},\ldots, r^{(l)}_{n_{l}}) \in \mathbb{R}^{n_l}_{> 0 }$ for $1\leq l\leq L-1$ and an additive noisy vector $\bm r^{(a)}=(r^{(a)}_{1},r^{(a)}_{2},\ldots,r^{(a)}_{n_{L}}) \in \mathbb{R}^{n_L}$ with pairwise different components.
      \item Define a disjoint partition $\sqcup_{s=1}^{m}\{I_s\}$ of $\{1,2,\dots, n_L\}$ \footnote{The value of $m$ ($1\leq m\leq n_{L}$) determines the defense of predictions, which will be analyzed in Theorem \ref{trade-off}.} such that $\cup_{s=1}^{m}I_s=\{1,2,\dots,n_L\}$ and $I_i \cap I_j=\emptyset$ for any $i \neq j$. Randomly select noisy numbers  $\gamma_{I_1}, \gamma_{I_2}, \dots, \gamma_{I_m} \in \mathbb{R}$. Then let $\bm \gamma=(\gamma_1,\gamma_2,\ldots,\gamma_{n_{L}})$, whose coordinates are given as $\gamma_i=\gamma_{I_s}$ for $i \in I_s$ and $s=1,2, \dots, m$.
  \end{itemize}
Note that, the server will keep $\left(\{\bm r^{(l)}\}_{l=1}^{L-1}, \bm{\gamma}\right)$ secret for preventing curious clients from obtaining true model parameters.
    \item \textbf{Parameter perturbation:} With the random selected noises, the global parameters $W=\{W^{(l)}\}^{L}_{l=1}$ are perturbed as: for the $l$-th layer's parameter matrix $W^{(l)}\in \mathbb{R}^{n_l\times n_{l-1}}$, the perturbed value $\widehat{W}^{(l)}$ is 
   \begin{equation}\label{eq:para_cons}
\widehat{W}^{(l)}=\left\{
\begin{aligned}
&  R^{(l)} \circ W^{(l)} ,  \textnormal{ for } 1\leq l\leq L-1,\\
& R^{(l)}  \circ W^{(l)} + R^{(a)},  \textnormal{ for } l =L,
\end{aligned}\right.
\end{equation}
where $R^{(l)}\in \mathbb{R}^{n_l\times n_{l-1}}$ and $R^{(a)}\in \mathbb{R}^{n_{L}\times n_{L-1}}$ satisfy
 \begin{eqnarray}
R^{(l)}_{ij}&=&\left\{
\begin{aligned}
  & r^{(1)}_i ,  ~~~~~~~~\textnormal{ when } l=1 \\
&  r^{(l)}_i /  r^{(l - 1)}_j, \textnormal{ when } 2\leq l \leq L-1 \\
 & 1 /  r^{(L - 1)}_j,  ~~\textnormal{ when } l=L
\end{aligned}\right.  \label{eq:para_cons1} \\
R^{(a)}_{ij}& =& \gamma_i \cdot r^{(a)}_{i}, \label{eq:para_cons2}
\end{eqnarray}
where  $i\in [1, n_{l}]$ and $j\in [1,n_{l-1}]$ in Eq. \eqref{eq:para_cons1}, and $i\in [1, n_{L}]$ and $j\in [1,n_{L-1}]$ in Eq. \eqref{eq:para_cons2}.
\end{itemize}

Finally, the server broadcasts the current perturbed parameters $\widehat{W}=\{\widehat{W}^{(l)}\}_{l=1}^{L}$ and the additive noisy vector $\bm r^{(a)}$ to all clients for local model training.

\subsection{Local Model Training}\label{MLP:subsec:LocalTraining}
After receiving $\widehat{W}=\{\widehat{W}^{(l)}\}_{l=1}^{L}$, each client $\mathcal{C}_{k} \in \mathcal{C}$ conducts local model training with local training data $\mathcal{D}_{k}$. As introduced in Section \ref{subsec:pre_FL}, for each sample $(\bm{x}_{i},\bm{\bar{y}}_{i}) \in \mathcal{D}_{k}$, $\mathcal{C}_{k}$ executes \emph{forward propagation} and \emph{backward propagation} to compute gradients. Note that we ignore the subscript $i$ of $(\bm{x}_{i},\bm{\bar{y}}_{i})$ for simplicity unless other specification.

\textbf{(1) Forward Propagation.} 
Obviously, with the perturbed parameters $\widehat{W}$, the output of each layer is also perturbed, denoted as $\bm{\hat{y}}^{(l)}$ for $1\leq l\leq L$. Based on Eq. \eqref{eq:forward0}, the perturbed output $\bm{\hat{y}}^{(l)} \in \mathbb{R}^{n_{l}}$ is computed as 
\begin{equation}\label{eq:noisyforward}
\bm{\hat{y}}^{(l)}=\left\{
\begin{aligned}
 &  \mathrm{ReLU}\left(\widehat{W}^{(l)}\bm{\hat{y}}^{(l-1)} \right), \textnormal{ for } 1\leq l\leq L-1. \\
 & \widehat{W}^{(l)}\bm{\hat{ y}}^{(l-1)} , ~~~~~~~~~~~~\textnormal{ for } l=L.
\end{aligned}\right.
\end{equation}
where $\bm{\hat{y}}^{(0)}=\bm{x}$ is the input feature of the sample $(\bm{x},\bm{\bar{y}})$. Theorem \ref{theorem:output} shows the important relations between the perturbed output and the true output given in Eq. \eqref{eq:forward0}.
\begin{theorem}\label{theorem:output}
For $1 \leq l \leq L$, the perturbed output vector $\bm{\hat{y}}^{(l)}$ computed in Eq. \eqref{eq:noisyforward} and the true output vector $\bm{y}^{(l)}$ computed in Eq. \eqref{eq:forward0}
have the following relationship:
\begin{align}
\bm{\hat{y}}^{(l)}= & \bm r^{(l)} \circ \bm y^{(l)}, \textnormal{ when } 1\leq l\leq L-1. \label{eq:l_out} \\
\bm{\hat{y}}^{(L)} = & \bm y^{(L)} + \alpha \bm \gamma \circ \bm r^{(a)}=\bm y^{(L)} +  \alpha \bm{r} . \label{eq:final_out}
\end{align}
where $\alpha=\sum_{i=1}^{n_{L-1}} \hat {y}^{(L - 1)}_i$ and $\bm{r}=\bm \gamma \circ \bm{r}^{(a)}$.
\end{theorem}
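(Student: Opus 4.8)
The plan is to prove both identities simultaneously by induction on the layer index $l$, propagating the Hadamard scaling forward through the network. The whole argument rests on a single structural property of the activation: because every multiplicative noise $r^{(l)}_i$ is a \emph{positive} real number (as stipulated in the noise-selection step), $\mathrm{ReLU}$ is positively homogeneous, i.e. $\mathrm{ReLU}(c\,z)=c\,\mathrm{ReLU}(z)$ for any scalar $c>0$. This is exactly what lets a diagonal positive scaling commute past the nonlinearity at each hidden layer, so that the clean relation $\bm{\hat y}^{(l)}=\bm r^{(l)}\circ\bm y^{(l)}$ survives the $\mathrm{ReLU}$.

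For the base case $l=1$, I would expand $\widehat{W}^{(1)}\bm{\hat y}^{(0)}=\widehat{W}^{(1)}\bm x$ using $\widehat{W}^{(1)}_{ij}=r^{(1)}_i W^{(1)}_{ij}$ from Eq.~\eqref{eq:para_cons1}; the common factor $r^{(1)}_i$ pulls out of the $i$-th row, giving $\widehat{W}^{(1)}\bm x=\bm r^{(1)}\circ(W^{(1)}\bm x)$, and applying $\mathrm{ReLU}$ with positive homogeneity yields $\bm{\hat y}^{(1)}=\bm r^{(1)}\circ\bm y^{(1)}$. For the inductive step at a hidden layer $2\le l\le L-1$, the crucial cancellation is that the chosen entry $R^{(l)}_{ij}=r^{(l)}_i/r^{(l-1)}_j$ is engineered so that the $r^{(l-1)}_j$ in the denominator exactly annihilates the factor $r^{(l-1)}_j$ carried by the inductive hypothesis $\hat y^{(l-1)}_j=r^{(l-1)}_j\,y^{(l-1)}_j$. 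After this cancellation the $i$-th coordinate of $\widehat{W}^{(l)}\bm{\hat y}^{(l-1)}$ equals $r^{(l)}_i\,(W^{(l)}\bm y^{(l-1)})_i$, so $\widehat{W}^{(l)}\bm{\hat y}^{(l-1)}=\bm r^{(l)}\circ(W^{(l)}\bm y^{(l-1)})$; pushing $\mathrm{ReLU}$ through the positive scaling once more closes the induction for Eq.~\eqref{eq:l_out}.

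The readout layer $l=L$ must be treated separately, since it carries no activation and absorbs the additive noise via Eq.~\eqref{eq:para_cons}. Here I would split $\widehat{W}^{(L)}=R^{(L)}\circ W^{(L)}+R^{(a)}$ and handle the two summands independently. In the multiplicative part, the entry $R^{(L)}_{ij}=1/r^{(L-1)}_j$ cancels the $r^{(L-1)}_j$ supplied by the hypothesis at layer $L-1$, collapsing that term to precisely the unscaled output $\bm y^{(L)}=W^{(L)}\bm y^{(L-1)}$ (no residual factor remains, as intended, since the server must be able to recover the true readout). In the additive part, $R^{(a)}_{ij}=\gamma_i r^{(a)}_i$ is independent of the column index $j$, so $(R^{(a)}\bm{\hat y}^{(L-1)})_i=\gamma_i r^{(a)}_i\sum_{j}\hat y^{(L-1)}_j=\alpha\,\gamma_i r^{(a)}_i$ with $\alpha=\sum_{i=1}^{n_{L-1}}\hat y^{(L-1)}_i$; adding the two contributions gives $\bm{\hat y}^{(L)}=\bm y^{(L)}+\alpha\,\bm\gamma\circ\bm r^{(a)}=\bm y^{(L)}+\alpha\bm r$, which is Eq.~\eqref{eq:final_out}.

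I do not anticipate a genuinely hard step; the one point requiring care is making explicit where positivity of the $r^{(l)}_i$ is indispensable, because without it $\mathrm{ReLU}$ would fail to commute with the diagonal scaling and the multiplicative relation would already break at the first hidden layer. The telescoping cancellation of the $r^{(l-1)}_j$ factors, deliberately built into the definitions \eqref{eq:para_cons1}--\eqref{eq:para_cons2}, is routine bookkeeping once the induction hypothesis is stated correctly, and the Hadamard identities in Eq.~\eqref{eq:hadamard0} can be invoked to present the row and column scalings cleanly rather than working coordinate by coordinate.
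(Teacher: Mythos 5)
Your proposal is correct and follows essentially the same route as the paper's proof: induction on the layer index, using positive homogeneity of $\mathrm{ReLU}$ (valid because $\bm r^{(l)}\in\mathbb{R}^{n_l}_{>0}$) to commute the scaling past the activation, the telescoping cancellation of $r^{(l-1)}_j$ built into $R^{(l)}_{ij}$, and the split of $\widehat{W}^{(L)}$ into its multiplicative and additive parts at the readout layer. The only cosmetic difference is that you work coordinate-wise while the paper packages the same cancellations as Hadamard-product identities with diagonal and all-ones matrices ($R^{(l)}D_{\bm r^{(l-1)}}=D_{\bm r^{(l)}}E^{(l)}$), which you yourself note as an equivalent presentation.
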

\begin{proof}
See Appendix \ref{proof_theory0}.
\end{proof}

\textbf{(2) Backward Propagation.}
After obtaining the perturbed outputs $\{\bm{\hat{y}}^{(l)}\}^{L}_{l=1}$, the client calculates corresponding gradients based on the specific loss function. Next, we take the mean squared error loss function as an example to show the relationship of the perturbed gradients and true gradients, which will help understand why the server can exactly recover the true global model in Section \ref{subsec:update}. Specifically, since the client can only obtain the perturbed prediction $\hat{\bm y}^{(L)}=\widehat{W}^{(L)}\bm{\hat{ y}}^{(L-1)}$, he/she can only use $\hat{\bm y}^{(L)}$ to compute the loss value, which is defined as: 
\begin{equation}\label{noisy loss function}
  \mathcal{\widehat{L}}\left(\widehat{W}; (\bm{x},\bm{\bar{y}})\right)=\frac{1}{2}\parallel\hat{\bm y}^{(L)}- \bar{\bm y}\parallel^{2}_{2}.
\end{equation}
With the above perturbed loss function, each client $\mathcal{C}_{k}$ can compute the perturbed gradient of the $l$-th layer, denoted as $\frac{\partial \mathcal{\widehat  L}(\widehat{W}; (\bm{x},\bm{\bar{y}}))}{\partial \widehat {W}^{(l)}} \in \mathbb{R}^{n_{l}\times n_{l-1}}$, based on the calculations of backward propagation in Section \ref{subsec:pre_FL}. What's more, the following Theorem \ref{gradient} shows the important relationship between the perturbed gradients and the true gradients.

 \begin{theorem}\label{gradient}
 For any $1 \leq l \leq L$, the perturbed gradients $\frac{\partial \mathcal{\widehat L}\left(\widehat{W}; (\bm{x},\bm{\bar{y}})\right)}{\partial \widehat{W}^{(l)}} \in \mathbb{R}^{n_{l}\times n_{l-1}}$ and the true gradients  $\frac{\partial \mathcal{L}\left(W; (\bm{x},\bm{\bar{y}})\right)}{\partial W^{(l)}} \in \mathbb{R}^{n_{l}\times n_{l-1}}$ satisfy
 \begin{equation*}
\frac{\partial \mathcal{\widehat  L}(\widehat{W}; (\bm{x},\bm{\bar{y}}))}{\partial \widehat {W}^{(l)}}=\frac{1}{R^{(l)}} \circ \frac{\partial \mathcal{L}\left(W; (\bm{x},\bm{\bar{y}})\right)}{\partial {W}^{(l)}} +
  \bm{r}^{T} \bm \sigma^{(l)} - \upsilon \bm \beta^{(l)},
\end{equation*}
 where $\frac{1}{R^{(l)}}$, $\bm{r}$, $\bm \sigma^{(l)}$, $\upsilon$ and $\bm \beta^{(l)}$ are denoted as 
 \begin{itemize}
     \item $\frac{1}{R^{(l)}}$ is the $n_l \times n_{l-1}$ matrix whose $(i,j)$-th entry is $\frac{1}{R^{(l)}_{ij}}$;
     \item $\bm{r}=\bm \gamma \circ \bm{r}^{(a)}$ and $\upsilon=\bm{r}^{T}\bm{r}$;
     \item   $\bm \sigma^{(l)} = \alpha\frac{\partial \hat {\bm y}^{(L)}}{\partial \widehat {W}^{(l)}}+ \Big(\frac{\partial \mathcal{\hat L}(\widehat{W} (\bm{x},\bm{\bar{y}}))}{\partial \hat{\bm y}^{(L)}}\Big)^{T}\frac{\partial  \alpha}{\partial \widehat {W}^{(l)}}$;
     \item  $\bm \beta^{(l)} = \alpha \frac{\partial \alpha}{\partial \widehat {W}^{(l)}}$.
 \end{itemize}
 Note that, $\alpha=\sum_{i=1}^{n_{L-1}} \hat {y}^{(L - 1)}_i$, and when $l=L$, $\alpha$ is not a function of $\widehat {W}^{(L)}$, and thus $\frac{\partial  \alpha}{\partial \widehat {W}^{(L)}}=\bm 0_{n_{L}\times n_{L-1}}$, which implies that $\bm \sigma^{(L)} = \alpha\frac{\partial \hat {\bm y}^{(L)}}{\partial \widehat {W}^{(l)}}$ and $\bm \beta^{(L)} = \bm{0}_{n_{L}\times n_{L-1}}$.
 \end{theorem}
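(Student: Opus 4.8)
The plan is to differentiate the perturbed mean‑squared loss $\widehat{\mathcal L}=\tfrac12\|\hat{\bm y}^{(L)}-\bar{\bm y}\|_2^2$ directly by the chain rule and then feed in the output relation of Theorem~\ref{theorem:output} to peel off the true‑gradient term. Since $\partial\widehat{\mathcal L}/\partial\hat{\bm y}^{(L)}=\hat{\bm y}^{(L)}-\bar{\bm y}$, working entrywise the chain rule gives
\[
\frac{\partial \widehat{\mathcal L}}{\partial \widehat W^{(l)}_{ij}}=\sum_{k=1}^{n_L}\bigl(\hat y^{(L)}_k-\bar y_k\bigr)\frac{\partial \hat y^{(L)}_k}{\partial \widehat W^{(l)}_{ij}}.
\]
The whole argument then reduces to substituting the relation $\hat y^{(L)}_k=y^{(L)}_k+\alpha r_k$ from \eqref{eq:final_out} into both factors and regrouping.

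First I would substitute in the derivative factor, $\partial\hat y^{(L)}_k/\partial\widehat W^{(l)}_{ij}=\partial y^{(L)}_k/\partial\widehat W^{(l)}_{ij}+r_k\,\partial\alpha/\partial\widehat W^{(l)}_{ij}$, and in the residual factor, $\hat y^{(L)}_k-\bar y_k=(y^{(L)}_k-\bar y_k)+\alpha r_k$. Expanding the product splits the sum into four groups. The group $\sum_k (y^{(L)}_k-\bar y_k)\,\partial y^{(L)}_k/\partial\widehat W^{(l)}_{ij}$ is, by the chain rule applied to the true loss $\mathcal L=\tfrac12\|\bm y^{(L)}-\bar{\bm y}\|_2^2$, exactly $\partial\mathcal L/\partial\widehat W^{(l)}_{ij}$, so this group is destined to become the first claimed term.

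The key step is to convert this derivative with respect to the \emph{perturbed} weight into a derivative with respect to the \emph{true} weight. The perturbation rule \eqref{eq:para_cons}--\eqref{eq:para_cons2} is a diagonal (entrywise) affine change of variables: each $\widehat W^{(l)}_{ij}$ depends on $W^{(l)}$ only through the single entry $W^{(l)}_{ij}$, and the additive shift $R^{(a)}$ at layer $L$ is a constant that drops out, so $\partial W^{(l)}_{ij}/\partial\widehat W^{(l)}_{ij}=1/R^{(l)}_{ij}$. Because the cross‑index derivatives vanish, the chain rule collapses to the single term $\partial\mathcal L/\partial\widehat W^{(l)}_{ij}=(1/R^{(l)}_{ij})\,\partial\mathcal L/\partial W^{(l)}_{ij}$, which assembles into $\frac{1}{R^{(l)}}\circ\frac{\partial\mathcal L}{\partial W^{(l)}}$. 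I expect this to be the main obstacle: it is the one place where one must justify that differentiating the true loss/output against the perturbed weights still produces the clean scalar factor $1/R^{(l)}_{ij}$, and that no off‑diagonal contributions survive.

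It remains to collect the three leftover groups. I would recombine $\alpha\sum_k r_k\,\partial y^{(L)}_k/\partial\widehat W^{(l)}_{ij}$ together with $\bigl(\sum_k(\hat y^{(L)}_k-\bar y_k)r_k\bigr)\partial\alpha/\partial\widehat W^{(l)}_{ij}$ by replacing $\partial y^{(L)}_k/\partial\widehat W^{(l)}_{ij}$ with $\partial\hat y^{(L)}_k/\partial\widehat W^{(l)}_{ij}-r_k\,\partial\alpha/\partial\widehat W^{(l)}_{ij}$. The first piece becomes $\alpha\,\bm r^{T}\partial\hat{\bm y}^{(L)}/\partial\widehat W^{(l)}$ and, upon recognizing $\sum_k(\hat y^{(L)}_k-\bar y_k)r_k=\bm r^{T}\bigl(\partial\widehat{\mathcal L}/\partial\hat{\bm y}^{(L)}\bigr)$, the two pieces reconstruct precisely $\bm r^{T}\bm\sigma^{(l)}$. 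The residual $-\alpha\bigl(\sum_k r_k^2\bigr)\partial\alpha/\partial\widehat W^{(l)}_{ij}$ equals $-\upsilon\,\alpha\,\partial\alpha/\partial\widehat W^{(l)}_{ij}=-\upsilon\bm\beta^{(l)}$ by the definitions $\upsilon=\bm r^{T}\bm r$ and $\bm\beta^{(l)}=\alpha\,\partial\alpha/\partial\widehat W^{(l)}$. Finally, for the boundary case $l=L$ I would observe that $\alpha=\sum_i\hat y^{(L-1)}_i$ involves only layers up to $L-1$, hence $\partial\alpha/\partial\widehat W^{(L)}=\bm 0$, which immediately forces $\bm\beta^{(L)}=\bm 0$ and collapses $\bm\sigma^{(L)}$ to its first term, matching the remark in the statement.
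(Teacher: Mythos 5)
Your proposal is correct and follows essentially the same route as the paper's proof: apply the chain rule to the perturbed MSE loss, substitute $\hat{\bm y}^{(L)}=\bm y^{(L)}+\alpha\bm r$ from Theorem~\ref{theorem:output} into both the residual and the Jacobian, and regroup the cross terms into $\bm r^{T}\bm\sigma^{(l)}$ and $-\upsilon\bm\beta^{(l)}$, with the remaining term converted via the entrywise change of variables $\partial W^{(l)}_{ij}/\partial\widehat W^{(l)}_{ij}=1/R^{(l)}_{ij}$. The only differences are presentational: you work entrywise where the paper works in matrix form, and you explicitly justify the $\frac{1}{R^{(l)}}\circ$ step that the paper asserts without comment.
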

\begin{proof}
See Appendix \ref{proof_Theory1}.
\end{proof}

From Theorem \ref{gradient}, we can observe that $\bm \sigma^{(l)}$ and $\bm \beta^{(l)}$ can be computed directly by clients and both two values decide whether the server can recover the true aggregated model parameters (see Section \ref{subsec:update} for more details). Hence, in addition to the perturbed gradients, the client also needs to compute two noisy items $\bm \sigma^{(l)}$ and $\bm \beta^{(l)}$.

Hence, for any sample ($\bm{x}_{i}, \bm{\bar{y}}_{i})\in \mathcal{D}_{k}$, each client $\mathcal{C}_{k}$ computes the perturbed local gradients $\frac{\partial \mathcal{\widehat{L}}(\widehat{W}; (\bm{x}_{i}, \bm{\bar{y}}_{i}))}{\partial \widehat{W}^{(l)}}$ and the corresponding noisy items, represented as 
$\bm{\sigma}^{(l)}_{(\bm{x}_{i}, \bm{\bar{y}}_{i})}$ and $\bm{\beta}^{(l)}_{(\bm{x}_{i}, \bm{\bar{y}}_{i})}$. Then, $\mathcal{C}_{k}$ computes the average local gradients $\nabla F(\widehat{W}^{(l)}, \mathcal{D}_{k})$ and $(\bm{\sigma}^{(l)}_{k}, \bm{\beta}^{(l)}_{k})$ as: for $l=1,2,\ldots, L$,
\begin{equation*}
\left\{
\begin{aligned}
\nabla F(\widehat{W}^{(l)}, \mathcal{D}_{k}) & :=\frac{1}{|\mathcal{D}_{k}|}\sum_{(\bm{x}_{i}, \bm{\bar{y}}_{i})\in \mathcal{D}_{k}}\frac{\partial \mathcal{\widehat{L}}\left(\widehat{W}; (\bm{x}_{i}, \bm{\bar{y}}_{i})\right)}{\partial \widehat{W}^{(l)}}; \\
\bm{\sigma}^{(l)}_{k}&:=\frac{1}{|\mathcal{D}_{k}|}\sum_{(\bm{x}_{i}, \bm{\bar{y}}_{i})\in \mathcal{D}_{k}}\bm{\sigma}^{(l)}_{(\bm{x}_{i}, \bm{\bar{y}}_{i})};\\
\bm{\beta}^{(l)}_{k}&:=\frac{1}{|\mathcal{D}_{k}|}\sum_{(\bm{x}_{i}, \bm{\bar{y}}_{i})\in \mathcal{D}_{k}}\bm{\beta}^{(l)}_{(\bm{x}_{i}, \bm{\bar{y}}_{i})}.
\end{aligned}\right.
\end{equation*}
Note that $\bm \gamma=(\gamma_1,\gamma_2,\ldots,\gamma_{n_{L}})$ is known for the server and $\gamma_i=\gamma_{I_s}$ for $i \in I_s$ and $s=1,2, \dots, m.$ (See ``Random noises selection"). To recover the term $\bm{r}^{T} \bm \sigma^{(l)}$ of the equation in Theorem \ref{gradient}, the client $\mathcal{C}_{k}$ should compute the following items:  
\[\widetilde{\bm \sigma}^{(l)}_{k,s}:=  (\bm r^{(a)}_{|I_s})^{T}\bm{\sigma}^{(l)}_{{k}_{| I_s}}, \textnormal{ for } s=1,2, \dots, m,\]
where $\bm r^{(a)}_{|I_s}$ is the restriction of the vector $\bm r^{(a)}$ on $I_s$ and $\bm{\sigma}^{(l)}_{{k}_{| I_s}}$ is the sub-matrix of $\bm{\sigma}^{(l)}_{k}$ consists of the rows indexed by $I_s$.

Finally, $\mathcal{C}_{k}$ uploads $\{\nabla F(\widehat{W}^{(l)}, \mathcal{D}_{k})\}^{L}_{l=1}$ and $\{\widetilde{\bm \sigma}^{(l)}_{k,1}, \widetilde{\bm \sigma}^{(l)}_{k,2}, \cdots, \widetilde{\bm \sigma}^{(l)}_{k,m},\bm{\widehat{ \beta}}^{(l)}_{k} \}^{L}_{l=1}$ to the server for model update.

\subsection{Global Model Update}\label{subsec:update}
After receiving $\{\nabla F(\widehat{W}^{(l)}, \mathcal{D}_{k})\}^{L}_{l=1}$ and $(m+1)$ noisy terms $\{\widetilde{\bm \sigma}^{(l)}_{k,1}, \cdots, \widetilde{\bm \sigma}^{(l)}_{k,m},\bm{ \beta}^{(l)}_{k} \}^{L}_{l=1}$ from all clients, the server needs to update the current global model. Specifically, in addition to aggregate all received local gradients, the server also needs to recover exact aggregated results to ensure training accuracy. The details are given as follows.
\begin{itemize}
  \item For $l=1,2,\ldots, L$, perform the aggregation operation as:
  \begin{equation}\label{eq:aggre00}
\left\{
\begin{aligned}
\nabla F(\widehat{W}^{(l)})&=\sum^{K}_{k=1}\frac{|\mathcal{D}_{k}|}{|\mathcal{D}|}\nabla F (\widehat{W}^{(l)}, \mathcal{D}_{k}),\\
\bm \sigma^{(l)}_{s}&=\sum^{K}_{k=1}\frac{|\mathcal{D}_{k}|}{|\mathcal{D}|}\widetilde{\bm \sigma}^{(l)}_{k,s}, s=1,2,\dots,m,\\
\bm{ \beta}^{(l)}&=\sum^{K}_{k=1}\frac{|\mathcal{D}_{k}|}{|\mathcal{D}|}\bm{\beta}^{(l)}_{k}.
\end{aligned}\right.
\end{equation}

  \item According to Theorem \ref{gradient}, recover the true aggregated gradients with the secret noises $R^{(l)}$, $\gamma_{I_s}$ and $\upsilon$:
  \begin{small}
  \begin{equation*}
      \nabla F(W^{(l)})=R^{(l)}\circ\left(\nabla F(\widehat{W}^{(l)})-\Big(\sum_{s=1}^{m}\gamma_{I_s}\bm \sigma^{(l)}_{s}\Big)+\upsilon \bm{\beta}^{(l)}\right),
  \end{equation*}
  \end{small}
where $l=1,2,\ldots, L$. 
\item Update the current global model for the next iteration as: for $l=1,2,\ldots,L$,
\begin{equation}\label{eq:aggre}
    W^{(l)}\Leftarrow W^{(l)}-\eta \nabla F(W^{(l)}).
\end{equation}
\end{itemize}
Next, we theoretically demonstrate that the server can obtain the true global model from the perturbed gradients uploaded by clients. In other words, our proposed scheme can achieve the same model accuracy as the true training introduced in Section \ref{subsec:pre_FL}. 
\begin{theorem}\label{modelupdate}
$\nabla F(W^{(l)})$ is the true aggregated gradients satisfying 
\begin{equation}\label{eq:aggregradient}
    \nabla F(W^{(l)})=\sum^{K}_{k=1}\frac{|\mathcal{D}_{k}|}{|\mathcal{D}|}\nabla F\left(W^{(l)}, \mathcal{D}_{k}\right) 
\end{equation}
where $1\leq l \leq L$ and $\nabla F\left(W^{(l)}, \mathcal{D}_{k}\right)$ is the true local gradients denoted in Eq. \eqref{eq:averlocal}.
\end{theorem}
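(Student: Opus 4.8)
The plan is to show that the server's recovery formula inverts, term by term, the per-sample perturbation relation of Theorem \ref{gradient}, and that every step survives the two linear averaging/aggregation operations unchanged; once the bookkeeping of which quantities are sample-independent is fixed, everything reduces to elementary Hadamard-product algebra.

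First I would fix a layer $l$ and start from the identity of Theorem \ref{gradient} written for a single sample $(\bm{x}_i, \bm{\bar{y}}_i)$. The three correction objects $R^{(l)}$, $\bm{r}=\bm{\gamma}\circ\bm{r}^{(a)}$ and the scalar $\upsilon=\bm{r}^{T}\bm{r}$ are chosen once by the server and are independent of the sample, so averaging the identity over $\mathcal{D}_k$ and then forming the weighted aggregate $\sum_k \frac{|\mathcal{D}_k|}{|\mathcal{D}|}(\cdot)$ commutes with the Hadamard multiplication by $\frac{1}{R^{(l)}}$ and with $\upsilon$. This yields
\[
\nabla F(\widehat{W}^{(l)}) = \frac{1}{R^{(l)}} \circ \left(\sum_{k=1}^{K} \frac{|\mathcal{D}_k|}{|\mathcal{D}|}\nabla F(W^{(l)},\mathcal{D}_k)\right) + \bm{r}^{T}\bm{\sigma}^{(l)} - \upsilon\,\bm{\beta}^{(l)},
\]
where $\bm{\sigma}^{(l)}$ and $\bm{\beta}^{(l)}$ now denote the aggregated noise items defined in Eq. \eqref{eq:aggre00}.

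The crux, which I expect to be the main obstacle, is reconstructing the cross term $\bm{r}^{T}\bm{\sigma}^{(l)}$ from the quantities the server actually receives, since clients never learn the secret vector $\bm{\gamma}$. Here I would expand the contraction over the output index set $\{1,\dots,n_L\}$ and regroup it along the disjoint partition $\sqcup_{s=1}^{m} I_s$. Because $\gamma_i$ is constant (equal to $\gamma_{I_s}$) on each block $I_s$, the secret scalar factors out of each block and gives $\bm{r}^{T}\bm{\sigma}^{(l)}=\sum_{s=1}^{m}\gamma_{I_s}\,(\bm{r}^{(a)}_{|I_s})^{T}\bm{\sigma}^{(l)}_{|I_s}$. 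Each block product is precisely the quantity $\widetilde{\bm{\sigma}}^{(l)}_{k,s}$ that the client forms from the public $\bm{r}^{(a)}$, and after the weighted aggregation it matches $\bm{\sigma}^{(l)}_{s}$ of Eq. \eqref{eq:aggre00}; hence the aggregated cross term equals $\sum_{s=1}^{m}\gamma_{I_s}\bm{\sigma}^{(l)}_{s}$, which the server can assemble using its secret $\gamma_{I_s}$. This block-regrouping is exactly the design feature that lets the perturbation be undone without disclosing $\bm{\gamma}$.

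Finally I would substitute the two displays into the recovery formula and check the cancellations: subtracting $\sum_{s}\gamma_{I_s}\bm{\sigma}^{(l)}_{s}$ removes the cross term and adding $\upsilon\,\bm{\beta}^{(l)}$ removes the $-\upsilon\,\bm{\beta}^{(l)}$ term, leaving $\frac{1}{R^{(l)}}\circ\sum_k \frac{|\mathcal{D}_k|}{|\mathcal{D}|}\nabla F(W^{(l)},\mathcal{D}_k)$ inside the outer Hadamard product with $R^{(l)}$. Since the multiplicative noises lie in $\mathbb{R}_{>0}$, every entry of $R^{(l)}$ is nonzero, so $R^{(l)}\circ\frac{1}{R^{(l)}}$ is the all-ones matrix and the two Hadamard factors annihilate entrywise, leaving exactly $\sum_k \frac{|\mathcal{D}_k|}{|\mathcal{D}|}\nabla F(W^{(l)},\mathcal{D}_k)$, i.e. Eq. \eqref{eq:aggregradient}. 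The only genuine care needed is the block-regrouping argument and confirming that averaging over samples and over clients is linear in each of the three additive pieces.
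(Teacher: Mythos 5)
Your proposal is correct and follows essentially the same route as the paper's proof: average the per-sample identity of Theorem~\ref{gradient} using the sample-independence of $R^{(l)}$, $\bm{r}$ and $\upsilon$, recover the cross term $\bm{r}^{T}\bm{\sigma}^{(l)}$ by regrouping the contraction over the partition blocks $I_s$ so that the constant $\gamma_{I_s}$ factors out (this is exactly the paper's step $(e)$), and then cancel the additive terms and the Hadamard factors $R^{(l)}\circ\frac{1}{R^{(l)}}$. No gaps.
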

\begin{proof}
See Appendix \ref{proof_Theory2}. 
\end{proof}

According to Theorem \ref{modelupdate}, Eq. \eqref{eq:aggre} is derived as 
\[
W^{(l)}\Leftarrow W^{(l)}-\eta \sum^{K}_{k=1}\frac{|\mathcal{D}_{k}|}{|\mathcal{D}|}\nabla F\left(W^{(l)}, \mathcal{D}_{k}\right),
\]
which is equal to Eq. \eqref{eq:update}, i.e., the update without noises perturbation.

\section{Theoretical analysis}\label{sec:theo}
Based on design goals, we theoretically analyze the inference attacks defence of our method. 
Essentially, we need to ensure that clients cannot obtain the true information received from the server or obtained from local training. As introduced in Section 
\ref{sec:ProSche:MLP}, clients first receive perturbed global model parameters $\widehat{W}$ from the server, and then perform forward propagation and backward propagation to obtain the output $\hat{\bm{y}}^{(l)}$ and the gradient $\frac{\partial \mathcal{\widehat  L}(\widehat{W}; (\bm{x},\bm{\bar{y}}))}{\partial \widehat {W}^{(l)}}$ of each layer, respectively. Therefore, our method needs to meet the following three privacy requirements: (1) the privacy-preservation of global model parameters, (2) the privacy-preservation of outputs in forward propagation and (3) the privacy-preservation of local gradients in backward propagation. 

\subsection{Privacy-preservation of global model parameters}\label{subsec:security:parameter}
As introduced in Section \ref{MLP:subsec:ParaPerturb}, the server perturbs current global model parameters by randomly selected positive real number before distributing. Thus, we prove that curious clients cannot obtain true global model parameters $W$ from the received perturbed parameters $\widehat{W}$. 

\begin{theorem}\label{model parameters}
For any given perturbed global model parameters $\widehat {W}=\{\widehat W^{(l)}\}^{L}_{l=1}$, there always exist infinitely many different model parameters $W=\{ W^{(l)}\}^{L}_{l=1}$ and noises $(\{ R^{(l)}\}_{l=1}^{L},  R^{(a)})$ satisfying Eq \eqref{eq:para_cons}, i.e.,
\begin{equation*}
\widehat{W}^{(l)}=\left\{
\begin{aligned}
&  R^{(l)} \circ W^{(l)} ,  \textnormal{ for } 1\leq l\leq L-1,\\
& R^{(l)}  \circ W^{(l)} + R^{(a)},  \textnormal{ for } l =L.
\end{aligned}\right.
\end{equation*}
\end{theorem}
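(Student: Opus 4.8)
The plan is to prove this as a \emph{non-uniqueness} (under-determination) statement: given the fixed perturbed parameters $\widehat{W}$, the defining equations \eqref{eq:para_cons} leave enough free noise parameters that $W$ can be solved consistently for a whole continuum of noise choices. The conceptual heart is a degree-of-freedom count. The equations impose $\sum_{l=1}^{L} n_l n_{l-1}$ scalar constraints (one per entry of $\widehat{W}$), whereas the unknowns are the $\sum_{l=1}^{L} n_l n_{l-1}$ entries of $W$ \emph{together with} the noise parameters $\{\bm r^{(l)}\}_{l=1}^{L-1}$, $\bm r^{(a)}$ and $\bm\gamma$. Since the noise parameters are genuine extra unknowns, the system is underdetermined, so I expect infinitely many admissible $(W,\text{noises})$ pairs, and the rest of the plan is to make this explicit by exhibiting the inversion.

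First I would treat the multiplicative layers $1\le l\le L-1$. By \eqref{eq:para_cons1} every entry $R^{(l)}_{ij}$ is a ratio of strictly positive noise components (or a single such component when $l=1$), hence $R^{(l)}_{ij}\neq 0$. The relation $\widehat{W}^{(l)}=R^{(l)}\circ W^{(l)}$ is therefore invertible entrywise, giving $W^{(l)}=\tfrac{1}{R^{(l)}}\circ\widehat{W}^{(l)}$. Thus \emph{any} admissible choice of the positive vectors $\{\bm r^{(l)}\}_{l=1}^{L-1}$ determines a consistent $W^{(l)}$ for every such layer, with no compatibility condition required.

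Next I would handle the output layer $l=L$, where \eqref{eq:para_cons} reads $\widehat{W}^{(L)}=R^{(L)}\circ W^{(L)}+R^{(a)}$. This is affine in $W^{(L)}$, and by \eqref{eq:para_cons1} $R^{(L)}_{ij}=1/r^{(L-1)}_j\neq 0$, so I can subtract the additive term and invert, using \eqref{eq:para_cons2}, to obtain $W^{(L)}_{ij}=r^{(L-1)}_j\bigl(\widehat{W}^{(L)}_{ij}-\gamma_i r^{(a)}_i\bigr)$. Hence any admissible pair $(\bm\gamma,\bm r^{(a)})$, together with the already-fixed $\bm r^{(L-1)}$, yields a consistent $W^{(L)}$.

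Finally I would verify that these solutions are genuinely \emph{infinitely many distinct} parameter sets, not merely infinitely many noise labels for one $W$. Fixing all noises except a single coordinate $\gamma_i$, which ranges freely over $\mathbb R$, the formula $W^{(L)}_{ij}=r^{(L-1)}_j(\widehat{W}^{(L)}_{ij}-\gamma_i r^{(a)}_i)$ shows that $W^{(L)}_{ij}$ varies strictly with $\gamma_i$, because $r^{(L-1)}_j>0$ and I may choose the index $i$ with $r^{(a)}_i\neq 0$ (the components of $\bm r^{(a)}$ are pairwise distinct, so at most one vanishes). Distinct values of $\gamma_i$ therefore produce distinct $W$, yielding an infinite family. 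The only point needing care---the step I regard as the main obstacle---is confirming that the admissibility constraints on the noises (strict positivity of each $\bm r^{(l)}$ and pairwise-distinct components of $\bm r^{(a)}$) can be satisfied \emph{simultaneously} with the requirement that the resulting $W$ actually change; the single-coordinate perturbation above is designed precisely to sidestep any entry-vanishing degeneracy while keeping all constraints intact.
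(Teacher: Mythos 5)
Your proposal is correct and follows essentially the same route as the paper's own proof: both fix an arbitrary admissible choice of the noise vectors and invert the perturbation entrywise, setting $W^{(l)}=\frac{1}{R^{(l)}}\circ\widehat{W}^{(l)}$ for $1\le l\le L-1$ and $W^{(L)}=\frac{1}{R^{(L)}}\circ(\widehat{W}^{(L)}-R^{(a)})$, then appeal to the arbitrariness of the noises to conclude infinitude. Your final step --- explicitly checking, by varying a single $\gamma_i$ with $r^{(a)}_i\neq 0$, that distinct noise choices yield genuinely distinct $W$ rather than merely distinct labels for the same $W$ --- is a small but worthwhile tightening of a point the paper asserts without verification.
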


\begin{proof}
 As shown in Section \ref{MLP:subsec:ParaPerturb}, the  noises $R^{(l)}\in \mathbb{R}^{n_l\times n_{l-1}}$ and $R^{(a)}\in \mathbb{R}^{n_{L}\times n_{L-1}}$ used for perturbing are given as:
  \begin{eqnarray*}
R^{(l)}_{ij}&=&\left\{
\begin{aligned}
  & r^{(1)}_i ,  ~~~~~~~~\textnormal{ when } l=1 \\
&  r^{(l)}_i /  r^{(l - 1)}_j, \textnormal{ when } 2\leq l \leq L-1 \\
 & 1 /  r^{(L - 1)}_j,  ~~\textnormal{ when } l=L
\end{aligned}\right.   \\
R^{(a)}_{ij}& =& \gamma_i \cdot r^{(a)}_{i}. 
\end{eqnarray*}
where the vectors $\bm r^{(l)}=(r^{(l)}_{1},r^{(l)}_{2},\ldots, r^{(l)}_{n_{l}}) \in \mathbb{R}^{n_l}_{> 0 }$ for $l=1, 2, \dots, L-1$, and $\bm r^{(a)}=(r^{(a)}_{1},r^{(a)}_{2},\ldots,r^{(a)}_{n_{L}}) \in \mathbb{R}^{n_L}$ with pairwise different components are randomly selected and kept secret by the server.

Next, we show that we can find infinite pairs $W=\{W^{(l)}\}^{L}_{l=1}$ and $(R=\{R^{(l)}\}^{L}_{l=1}, R^{(a)})$ to construct a given $\widehat {W}=\{\widehat W^{(l)}\}^{L}_{l=1}$. Specifically, for any given perturbed global model parameters $\widehat {W}=\{\widehat W^{(l)}\}^{L}_{l=1}$, based on the above equations, we can construct the model parameters $W=\{ W^{(l)}\}^{L}_{l=1}$ as follows:  
\begin{equation}\label{W}
\left\{
\begin{aligned}
 &  W^{(l)}=\frac{1}{R^{(l)}}\circ\widehat W^{(l)}, \textnormal{ for } l=1,2,\dots, L-1,\\
 & W^{(L)}=\frac{1}{R^{(L)}}\circ(\widehat W^{(L)}-R^{(a)}),
\end{aligned}\right.
\end{equation}
where for $l=1,2, \dots, L$, $\frac{1}{R^{(l)}}\in \mathbb{R}^{n_l\times n_{l-1}}$ is defined as: the $(i,j)$-th entry of $\frac{1}{R^{(l)}}$ is 
$\frac{1}{R^{(l)}}_{ij}=(R^{(l)}_{ij})^{-1}$.

Then, we can show that the noises $(\{R^{(l)}\}_{l=1}^{L}, R^{(a)})$ and the model parameters $W=\{ W^{(l)}\}^{L}_{l=1}$ constructed by Eq. \eqref{W} satisfy Eq. \eqref{eq:para_cons} as follows:
\begin{equation*}
   R^{(l)} \circ W^{(l)} = R^{(l)} \circ\frac{1}{R^{(l)}}\circ\widehat W^{(l)}=\widehat W^{(l)}, ~\textnormal{ for }1\leq l\leq L-1,
\end{equation*}
and 
\begin{eqnarray*}
R^{(L)}  \circ W^{(L)} + R^{(a)} &=& R^{(L)}\circ\frac{1}{R^{(L)}}\circ(\widehat W^{(L)}-R^{(a)})+ R^{(a)} \\
&=& \widehat W^{(L)}-R^{(a)}+ R^{(a)}=\widehat W^{(L)}.
\end{eqnarray*}
 Due to the arbitrariness of the vectors $\{\bm r^{(l)}\}_{l=1}^{L-1}$ and $\bm r^{(a)}$, there are infinitely many $\{ R^{(l)}\}_{l=1}^{L},  R^{(a)}$ since they are determined by  the vectors $\{\bm r^{(l)}\}_{l=1}^{L-1}$ and $\bm r^{(a)}$. Additionally, from Eq. \eqref{W}, we obtain that there exist infinitely many different model parameters $W=\{ W^{(l)}\}^{L}_{l=1}$ and noises $(\{R^{(l)}\}_{l=1}^{L}, R^{(a)})$ satisfying Eq. \eqref{eq:para_cons}.
\end{proof}
According to Theorem \ref{model parameters}, given the perturbed global model parameters $\widehat {W}=\{\widehat W^{(l)}\}^{L}_{l=1}$, there exist infinite pairs $W=\{ W^{(l)}\}^{L}_{l=1}$ and the corresponding noises $(R=\{R^{(l)}\}^{L}_{l=1}, R^{a})$ satisfying Eq. \eqref{eq:para_cons}. Thus, clients cannot determine the true model parameters $W=\{ W^{(l)}\}^{L}_{l=1}$ used for generating $\widehat {W}$.

\subsection{Privacy-preservation of outputs in forward propagation}\label{subsec:security:output}
As shown in \cite{NasrSH19}, curious clients can take the final prediction or intermediate output of models as input to launch membership inference attacks. Thus, we prove that curious clients cannot obtain the true output of each layer in forward propagation.

As shown in Theorem \ref{theorem:output}, the client can only obtain the perturbed outputs $\bm \hat{\bm y}=\{\bm \hat{\bm y}^{(l)}\}^{L}_{l=1}$, which satisfy that 
\begin{equation*}
\left\{
\begin{aligned}
 \bm{\hat{y}}^{(l)}= & \bm r^{(l)} \circ \bm y^{(l)}, \textnormal{ when } 1\leq l\leq L-1, \\
 \bm{\hat{y}}^{(L)} = & \bm y^{(L)} + \alpha \bm \gamma \circ \bm r^{(a)}=\bm y^{(L)} +  \alpha \bm{r},
\end{aligned}\right.
\end{equation*}
where $\alpha=\sum_{i=1}^{n_{L-1}} \hat {y}^{(L - 1)}_i$ and $\bm{r}=\bm \gamma \circ \bm{r}^{(a)}$. Next, we show that clients cannot determine the true output $\bm y^{(l)}$ from the perturbed output $\bm \hat{y}^{(l)}$.

Specifically, for $1\leq l \leq L-1$, since the noise vector $\bm r^{(l)}=(r^{(l)}_{1},r^{(l)}_{2},\ldots, r^{(l)}_{n_{l}}) \in \mathbb{R}^{n_l}_{> 0 }$ is randomly chosen by the server, similar to Theorem \ref{model parameters}, the client obviously cannot determine the true output $\bm y^{(l)}$ from the perturbed output $\bm{\hat{y}}^{(l)}=  \bm r^{(l)} \circ \bm y^{(l)}$.

For the prediction vector $\bm{\hat y}^{(L)}=\bm{y}^{(L)}+\alpha \bm \gamma \circ \bm{r}^{(a)}$, the parameter $\alpha$ and $\bm{r}^{(a)}=(r^{(a)}_{1}, r^{(a)}_{2}, \ldots, r^{(a)}_{n_{L}})$ are known to  clients, but $\bm \gamma=(\gamma_{1}, \gamma_{2}, \ldots, \gamma_{n_{L}})$ is chosen by the server randomly and is unknown to clients. Thus, we show that clients have no advantage to guess the true prediction from $\bm{\hat y}^{(L)}$ without knowing $\bm \gamma$. Recall that there exists a partition $\sqcup_{s=1}^{m}\{I_s\}$ of $\{1,2,\dots, n_L\}$, such that for any $i,j$ in the same $I_s$
satisfy that $\gamma_i=\gamma_j$. Thus, the privacy of predictions is mainly influenced by the parameter $m$. 
Specifically, we give the privacy of predictions in terms of $m$ under our proposed method in the following Theorem.
\begin{theorem}\label{trade-off}
When the number of classes $n_{L}=1$, clients cannot obtain any information about the true prediction $y^{(L)}$; When $n_{L} \geq 2$ and $m=1$,  the probability that clients obtain the true predictions $\bm{y}^{(L)}$ from the perturbed predictions $\bm{\hat y}^{(L)}$ is less than 1; When $n_{L} \geq 2$ and $2 \leq m \leq n_L$, the corresponding probability is less than or equal to $1/m$.
\end{theorem}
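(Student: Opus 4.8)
The plan is to read off, from Theorem~\ref{theorem:output}, the exact channel the attacker sees: coordinatewise $\hat y^{(L)}_i = y^{(L)}_i + \alpha\,\gamma_i\, r^{(a)}_i$ with $\gamma_i=\gamma_{I_s}$ for $i\in I_s$, where the client knows $\alpha$, $\bm r^{(a)}$ and $\hat{\bm y}^{(L)}$ but not $\bm\gamma$. Since the attacker ultimately cares about the predicted label, I would treat the ``true prediction'' for $n_L\ge 2$ as the class $\arg\max_i y^{(L)}_i$ (and, for $n_L=1$, the scalar output itself), and analyse, for each feasible $\bm\gamma$, the resulting candidate $\bm y^{(L)}=\hat{\bm y}^{(L)}-\alpha\,\bm\gamma\circ\bm r^{(a)}$.

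First I would dispatch the single-output case $n_L=1$: there is a single block and a single unknown $\gamma_1\in\mathbb{R}$, and $y^{(L)}_1=\hat y^{(L)}_1-\alpha\gamma_1 r^{(a)}_1$ sweeps out all of $\mathbb{R}$ as $\gamma_1$ ranges over $\mathbb{R}$ (using that $\alpha r^{(a)}_1\neq 0$, which holds generically). Hence every real value is consistent with the observation and none can be excluded, which is exactly ``no information about $y^{(L)}$.''

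For $n_L\ge 2$ the natural device is to regard each coordinate value as an affine function of its block's noise, $y^{(L)}_i(\gamma_{I_s})=\hat y^{(L)}_i-\alpha r^{(a)}_i\,\gamma_{I_s}$, a line of slope $-\alpha r^{(a)}_i$; the predicted class is the top of the upper envelope of these lines. In the case $m=1$ all coordinates share one $\gamma$, and because the $r^{(a)}_i$ are pairwise distinct (and $\alpha\neq0$) the slopes are distinct, so the envelope is attained by at least two different coordinates as $\gamma\to+\infty$ and as $\gamma\to-\infty$. Thus at least two classes are consistent with $\hat{\bm y}^{(L)}$ and the attacker cannot be certain, giving probability strictly less than $1$. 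For $2\le m\le n_L$ I would observe that each block-maximum $g_s(\gamma_{I_s})=\max_{i\in I_s}\big(\hat y^{(L)}_i-\alpha r^{(a)}_i\gamma_{I_s}\big)\to+\infty$ as $\gamma_{I_s}\to\pm\infty$, so every one of the $m$ blocks can independently host the global argmax; since the attacker must first pin down which block contains the maximum, and the $m$ blocks carry independent, unknown, arbitrarily large multipliers, the best it can do is guess among $m$ indistinguishable blocks, bounding the success probability by $1/m$.

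The step I expect to be the real obstacle is the last one: turning ``each block can host the argmax'' into the clean bound $\le 1/m$ requires arguing that, from the attacker's viewpoint, the $m$ blocks are \emph{equally likely} to contain the global maximum. This is where the precise meaning of ``probability'' matters — over the server's random draw of $(\gamma_{I_1},\dots,\gamma_{I_m})$ the functions $g_s$ generally have different shapes, so genuine symmetry is not automatic. I would address this either by an exchangeability argument relying on the $\gamma_{I_s}$ being i.i.d.\ and each $g_s$ having range $[\min g_s,+\infty)$, or by redefining the quantity as the attacker's advantage over blind guessing and showing that no decoding can separate the $m$ blocks; in both routes the distinctness of the $r^{(a)}_i$ and the unboundedness of the noise are the facts doing the work.
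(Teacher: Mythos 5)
Your proposal is correct and follows essentially the same route as the paper: reduce the question to identifying the arg-max of the prediction vector, use the pairwise-distinct $r^{(a)}_i$ to create ambiguity when $m=1$, and for $2\le m\le n_L$ reduce to $m$ block-wise candidate maxima carrying independent unbounded noises. The obstacle you flag at the end is resolved in the paper by exactly your first suggested route: for any candidate $m$-tuple $(y^{(L)}_{1'},\dots,y^{(L)}_{m'})$ there exists a consistent choice of $(\gamma_{I_1},\dots,\gamma_{I_m})$ reproducing the observations, so the client has no information about which block hosts the global maximum and can do no better than a uniform guess among the $m$ blocks, giving the $1/m$ bound.
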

\begin{proof}
When $n_{L}=1$, then the prediction is one dimensional, denoted as $\hat y^{(L)}=y^{(L)}+\alpha \gamma  r^{(a)}$, which usually represents regression tasks. Since $\gamma$ is chosen randomly by the server, clients cannot know the true prediction $y^{(L)}$.

 When $n_{L} \geq 2$, then the prediction is multi-dimensional, denoted as $\bm{\hat y}^{(L)}=\bm{y}^{(L)}+\alpha \bm \gamma \circ \bm{r}^{(a)}$, which usually represents classification tasks. Under this case, if $m=1$, then $\bm \gamma$ satisfies $\gamma_1=\gamma_2=\cdots=\gamma_{n_L}=\gamma$. Thus, $\hat{y}^{(L)}_{i}$ is denoted as $\hat y^{(L)}_i=y^{(L)}_i+\alpha \gamma  r^{(a)}_{i}$ for $i=1,2, \dots, n_L$. Note that $r^{(a)}_{1},r^{(a)}_{2}, \dots, r^{(a)}_{n_L}$ are pairwise distinct and $\gamma$ is randomly selected by the server, thus it can not determine the largest one among $y^{(L)}_{1}, y^{(L)}_{2}, \dots, y^{(L)}_{n_L}$. Hence the probability that clients obtain the true prediction is obviously less than $1$.

 If $2\leq m\leq n_{L}$,  then $\bm \gamma$ satisfies $\gamma_i=\gamma_{I_s}$ for $1 \leq s \leq m$ and $i \in I_s$. Let $s' \in I_s$ such that $y_{s'}^{(L)}=\max\limits_{i \in I_s}\{y^{(L)}_i\}$, then $\max\limits_{1 \leq i \leq n_{L}}\{ y^{(L)}_i\}=\max\{y_{1'}^{(L)}, y_{2'}^{(L)}, \dots, y_{m'}^{(L)}\}$, i.e., the maximal one among $ y_{1}^{(L)},  y_{2}^{(L)}, \dots, y_{n_L}^{(L)}$ is equal to the maximal one among $ y_{1'}^{(L)},  y_{2'}^{(L)}, \dots, y_{m'}^{(L)}$.
Hence the probability that clients obtain the true prediction is less than or equal to the probability that clients obtain the maximal one among $ y_{1'}^{(L)},  y_{2'}^{(L)}, \dots, y_{m'}^{(L)}$. Concretely, the noisy prediction vector $(\hat y_{1'}^{(L)}, \hat y_{2'}^{(L)}, \dots, \hat y_{m'}^{(L)})$ and the true prediction vector $ (y_{1'}^{(L)},  y_{2'}^{(L)}, \dots, y_{m'}^{(L)})$ satisfy the following $m$ equations:
\begin{equation}\label{31}
\left\{
\begin{aligned}
\hat y_{1'}^{(L)}&=y_{1'}^{(L)}+\alpha\gamma_{I_1}r^{(a)}_{1'},\\
\hat y_{2'}^{(L)}&=y_{2'}^{(L)}+\alpha\gamma_{I_2}r^{(a)}_{2'},\\
& \vdots \\
\hat y_{m'}^{(L)}&= y_{m'}^{(L)}+\alpha\gamma_{I_m}r^{(a)}_{m'}.
\end{aligned}\right.
\end{equation}
Note that $(\hat y_{1'}^{(L)}, \hat y_{2'}^{(L)}, \dots, \hat y_{m'}^{(L)})$ and $\alpha$ are known to the clients, and $\gamma_{I_1},  \gamma_{I_2}, \dots,  \gamma_{I_m}$ are independent randomly chosen by the server. Thus for any $m$-tuple $( y_{1'}^{(L)}, y_{2'}^{(L)}, \dots,  y_{m'}^{(L)}) \in \mathbb{R}^{m}$, there always exists an $m$-tuple $( \gamma_{I_1}, \gamma_{I_2}, \dots, \gamma_{I_m})$ satisfying Eq. \eqref{31}. Hence the client can not obtain any information about the true prediction vector $ (y_{1'}^{(L)},  y_{2'}^{(L)}, \dots, y_{m'}^{(L)})$. So the probability that the clients obtain the maximal one among $ y_{1'}^{(L)},  y_{2'}^{(L)}, \dots,  y_{m'}^{(L)}$ is equal to $1/m$. Therefore  the probability that clients obtain the true predictions $\bm{y}^{(L)}$ from the perturbed predictions $\bm{\hat y}^{(L)}$ is less than or equal to $1/m$.
\end{proof}
According to the above proofs, clients cannot obtain the correct prediction $\bm y^{(L)}$ of a given sample feature $\bm x$. Hence, they cannot efficiently launch the membership inference attack to infer if a certain data records was part of training dataset.

\subsection{Privacy-preservation of gradients in backward propagation}
As demonstrated in \cite{ZhuLH19}, local gradients can be used to reconstruct training data. Thus, we need to show that our method can efficiently prevent curious clients from obtaining the true local gradients. 

Based on Theorem \ref{gradient}, the perturbed gradients $\frac{\partial \mathcal{\widehat L}\left(\widehat{W}; (\bm{x},\bm{\bar{y}})\right)}{\partial \widehat{W}^{(l)}}$ and the true gradients  $\frac{\partial \mathcal{L}\left(W; (\bm{x},\bm{\bar{y}})\right)}{\partial W^{(l)}}$ satisfy the following condition: 
 \begin{equation*}
\frac{\partial \mathcal{\widehat  L}(\widehat{W}; (\bm{x},\bm{\bar{y}}))}{\partial \widehat {W}^{(l)}}=\frac{1}{R^{(l)}} \circ \frac{\partial \mathcal{L}\left(W; (\bm{x},\bm{\bar{y}})\right)}{\partial {W}^{(l)}} +
  \bm{r}^{T} \bm \sigma^{(l)} - \upsilon \bm \beta^{(l)},
\end{equation*}
where $\frac{\partial \mathcal{\widehat  L}(\widehat{W}; (\bm{x},\bm{\bar{y}}))}{\partial \widehat {W}^{(l)}}$, $\bm \sigma^{(l)}$ and $\bm \beta^{(l)}$ are computed by clients. However, $R^{(l)}$, $\bm{r}^{T}$ and $\upsilon$ are chosen by the server, which are unknown to the client. Similar to the proofs in Sections \ref{subsec:security:parameter} and \ref{subsec:security:output}, it is obviously that clients cannot obtain the true gradient from perturbed local gradients.

\section{Performance Evaluation}\label{sec:performance}

In this section, we empirically evaluate our method on real-world datasets in terms of \textbf{learning accuracy} and \textbf{inference attacks defence} (i.e., reconstruction and membership inference attacks). Meanwhile, we give a comparison with the state-of-the-art schemes to show the advantages of our method.

\subsection{Experimental Setup} 
Our method is implemented based on the native network layers of PyTorch. The experiments are conducted on single Tesla M40 GPU. In order to show that our method would not decrease the learning accuracy and can efficiently defense inference attacks, we adopt the state-of-the-art schemes, i.e., the FedAvg \cite{McMahanMRHA17}, PPDL\cite{ShokriS15}, SPN \cite{FanNJZLCY20} and DBCL \cite{Wangshusen2019}, as the baseline, and implement them based on their official released code. In all experiments, the training epochs and the batch-size of each client are set to be $200$ and $32$, respectively. 


\noindent\textbf{Datasets and Metrics}. We evaluate our method on two privacy-sensitive datasets covering both the bank and medical scenarios and one image dataset. 
\begin{itemize}
    \item \textbf{UCI Bank Marketing Dataset (UBMD)} \cite{MoroCR14} is related to direct marketing campaigns of a Portuguese banking institution and aims to predict the possibility of clients for subscribing deposits. It contains $41188$ instances of $17$ dimensional bank data. Following conventional practise, we split the dataset into training/validation/test sets by 8:1:1. We adopt MSE as the evaluation metric.
    \item \textbf{Lesion Disease Classification (LDC)} \cite{abs-1803-10417} \cite{abs-1902-03368}  provides $8$k training and $2$k test skin images for the classification of lesion disease. We down-sample the images into $64\times 64$ and adopt classification accuracy as the evaluation metric.
    \item \textbf{CIFAR-10} \cite{krizhevsky2009learning} contains $60000$ color images of size $32 \times 32$ divided into $10$ classes, and each of class contains $6000$ images. Following conventional practise, the dataset is divided into $50000$ training images and $10000$ test images. We adopt classification accuracy as the evaluation metric. 
\end{itemize}

\subsection{Learning Accuracy}
In this section, we evaluate the training accuracy of our algorithm on both regression and classification tasks, and give the comparison with the non-private approach i.e., FedAvg \cite{McMahanMRHA17}, and other state-of-the-art defence methods, i.e., PPDL\cite{ShokriS15}, SPN \cite{FanNJZLCY20} and DBCL \cite{Wangshusen2019}. 

\subsubsection{Regression}
We evaluate the learning accuracy of regression tasks on the UBMD dataset in terms of different layers $L\in\{3,5,7\}$ and numbers of clients $K\in\{1,5,10\}$. Table \ref{reg} shows results of these schemes for the final converged model on testsets. From the table, the learning accuracy of our method elegantly aligns with that of the FedAvg \cite{McMahanMRHA17} under various settings and outperforms than other defence methods, i.e.,  PPDL\cite{ShokriS15}, SPN \cite{FanNJZLCY20} and DBCL \cite{Wangshusen2019}. The main reason is that our method can eliminate the perturbed noises to obtain the true updated parameters, and thus would not reduce the learning accuracy compared with the non-private model training. However, PPDL\cite{ShokriS15}, SPN \cite{FanNJZLCY20} and DBCL \cite{Wangshusen2019} cannot eliminate the perturbed noises, which will inevitably reduce the learning accuracy.

\begin{table*}[htb]
\begin{center}
\caption{MSE comparison of different defenses for Regression Tasks. Lower MSE Means Better Performance.}
\label{reg}
\begin{threeparttable}
\scalebox{0.95}{
\begin{tabular}{cccccccccccccccc}
\toprule
& \multicolumn{3}{c}{\textbf{FedAvg} \cite{McMahanMRHA17}}  & \multicolumn{3}{c}{\textbf{PPDL}} & \multicolumn{3}{c}{\textbf{DBCL}} & \multicolumn{3}{c}{\textbf{SPN}} & \multicolumn{3}{c}{\textbf{Ours}} \\
\cmidrule(ll){2-4} \cmidrule(lll){5-7} \cmidrule(lll){8-10} \cmidrule(lll){11-13} \cmidrule(lll){14-16} 
  Clients & $L=3$ & $L=5$ & $L=7$ &   $L=3$ & $L=5$ & $L=7$ & $L=3$ & $L=5$ & $L=7$ & $L=3$ & $L=5$ & $L=7$ & $L=3$ & $L=5$ & $L=7$ \\
   \midrule
   1 & \textbf{0.059} & \textbf{0.059} & \textbf{0.058} & 0.063& 0.062& 0.068& 0.062& 0.063& 0.062& 0.064& 0.060& 0.067& 0.061 & \textbf{0.059} & 0.060  \\
   5 & \textbf{0.079} & \textbf{0.079} & 0.086 & 0.084& 0.088& 0.085& 0.091& 0.086& 0.085& 0.082& 0.084& 0.088& 0.081 & 0.080 & \textbf{0.084} \\
   10 & \textbf{0.097} & \textbf{0.100} & 0.113 & 0.115& 0.108& 0.119 & 0.124& 0.118& 0.120& 0.126& 0.123& 0.115& 0.100 & 0.101 & \textbf{0.102} \\
 \bottomrule
\end{tabular}
}
    \scriptsize
        \begin{tablenotes}
        \item Some operations (e.g., dividing by random vectors) that are unavoidable in model training may cause precision errors, but the corresponding effects are negligible (see the results for the FedAvg and our scheme). 
        \end{tablenotes}
\end{threeparttable}
\end{center}
\vspace{-3mm}
\end{table*}

\subsubsection{Classification}
In this section, we evaluate our method for the classification task with ResNet20, ResNet32 and ResNet56 models on the LDC and CIFAR-10 datasets. All the training methods adopt the cross entropy as the loss function. The accuracy of converged models on testsets is shown in Table \ref{cls}. From the table, we can obtain the learning accuracy of our method elegantly aligns with the FedAvg \cite{McMahanMRHA17} and has significant advantage over PPDL\cite{ShokriS15}, SPN \cite{FanNJZLCY20} and DBCL \cite{Wangshusen2019}. 

Note that one of the appealing features of our method is to allow clients to train over noisy models without accuracy loss. To this end, we further empirically prove this claim by comparing the convergence process of our method against the FedAvg \cite{McMahanMRHA17}.  Specifically, we train ResNet20 and ResNet32 \cite{HeZRS16} models on Lesion Disease Classification  dataset \cite{abs-1803-10417} \cite{abs-1902-03368} for $200$ epochs for our method and the FedAvg, respectively. For the fairness of comparison, we use different random seeds for each run, and set the seeds for our method and the FedAvg to be the same. Then we demonstrate the mean and standard deviation of test accuracy after each epoch in Figure \ref{fig:additionalResult}. As shown in the figure, the convergence process of our method and the FedAvg is almost exactly consistent. Although the curves of our method and the FedAvg may be noisy in the early stage, they tend to stabilize and converge to similar results as the training proceeds.


\begin{table*}[tbh]
\centering
  \caption{Accuracy result for classification task on LDC and CIFAR10 dataset. }\label{cls}
  \label{tab:participant}
  \begin{threeparttable}
  \scalebox{0.72}{
  \begin{tabular}{ccccccccccccccccc}
    \toprule
     & & \multicolumn{3}{c}{\textbf{FedAvg}}  & \multicolumn{3}{c}{\textbf{PPDL}} &
    \multicolumn{3}{c}{\textbf{DBCL}} &
    \multicolumn{3}{c}{\textbf{SPN}} &
    \multicolumn{3}{c}{\textbf{Ours}} \\
     \cmidrule(ll){3-5} \cmidrule(lll){6-8} \cmidrule(lll){9-11} \cmidrule(lll){12-14}\cmidrule(lll){15-17}  
    & participants & $\text{ResNet20}$ & $\text{ResNet32}$ & $\text{ResNet56}$ &   $\text{ResNet20}$ & $\text{ResNet32}$ & $\text{ResNet56}$ & $\text{ResNet20}$ & $\text{ResNet32}$ & $\text{ResNet56}$ & $\text{ResNet20}$ & $\text{ResNet32}$ & $\text{ResNet56}$ & $\text{ResNet20}$ & $\text{ResNet32}$ & $\text{ResNet56}$\\
    \midrule
    \multirow{3}{*}{LDC} 
     & 1 & \textbf{69.42} & 70.96 & 72.79 & 66.73 & 67.21 & 67.34 & 67.33 & 67.56 & 67.83 & 68.25 & 68.63 & 69.31 & 69.33 & \textbf{71.05} & \textbf{72.83} \\
     & 5 & 69.27 & 70.78 & 71.61 & 67.41 & 67.45& 67.63 & 69.23& 69.85 & 70.46 & 68.47 & 68.63 & 68.94 & \textbf{69.29} & \textbf{70.84} & 71.55\\
     & 10 & 69.14 & \textbf{70.64} & 71.10 & 67.21 & 67.45 & 67.55 & 68.93& 70.42 & 70.95 &  68.62 & 68.74 & 69.31 & \textbf{69.22}  & 70.62 & \textbf{71.23} \\
     \midrule
     \multirow{3}{*}{CIFAR10} 
     & 1 & \textbf{92.31} & 92.59 & 93.18 & 88.75& 89.26& 89.32& 91.56& 91.73& 92.05 &  89.03 & 89.36 & 89.72 & 92.24 & \textbf{92.61} & \textbf{93.20} \\
     & 5 & \textbf{92.17} & 92.47 & \textbf{93.09} & 88.67& 89.16& 89.24& 91.53& 91.48& 91.69 & 88.45 & 89.63 & 90.12 & 92.14 & \textbf{92.50} & 93.05\\
     & 10 & \textbf{91.93} & 92.19 & 92.32 & 88.41& 88.53& 88.74& 91.25& 91.72& 92.03& 89.59 & 89.68 & 89.81 & 91.89  & \textbf{92.24} & \textbf{92.35} \\
    \bottomrule
    \end{tabular}
    }
    \end{threeparttable}
    \vspace{-3mm}
\end{table*}

\begin{figure*}[htbp]
\centering
\begin{minipage}[t]{0.3\textwidth}
\centering
\includegraphics[width=4.5cm]{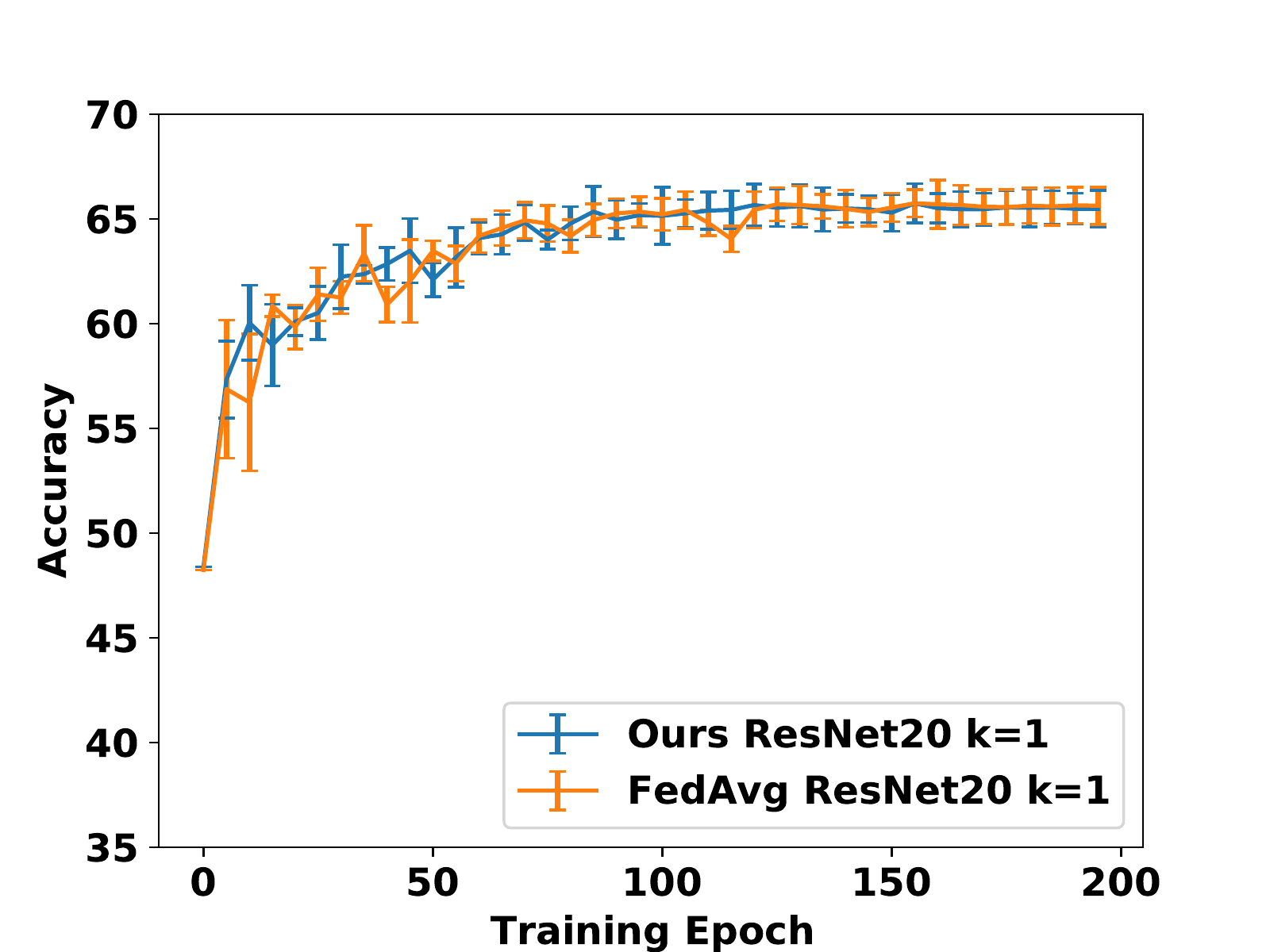}
\end{minipage}
\begin{minipage}[t]{0.3\textwidth}
\centering
\includegraphics[width=4.5cm]{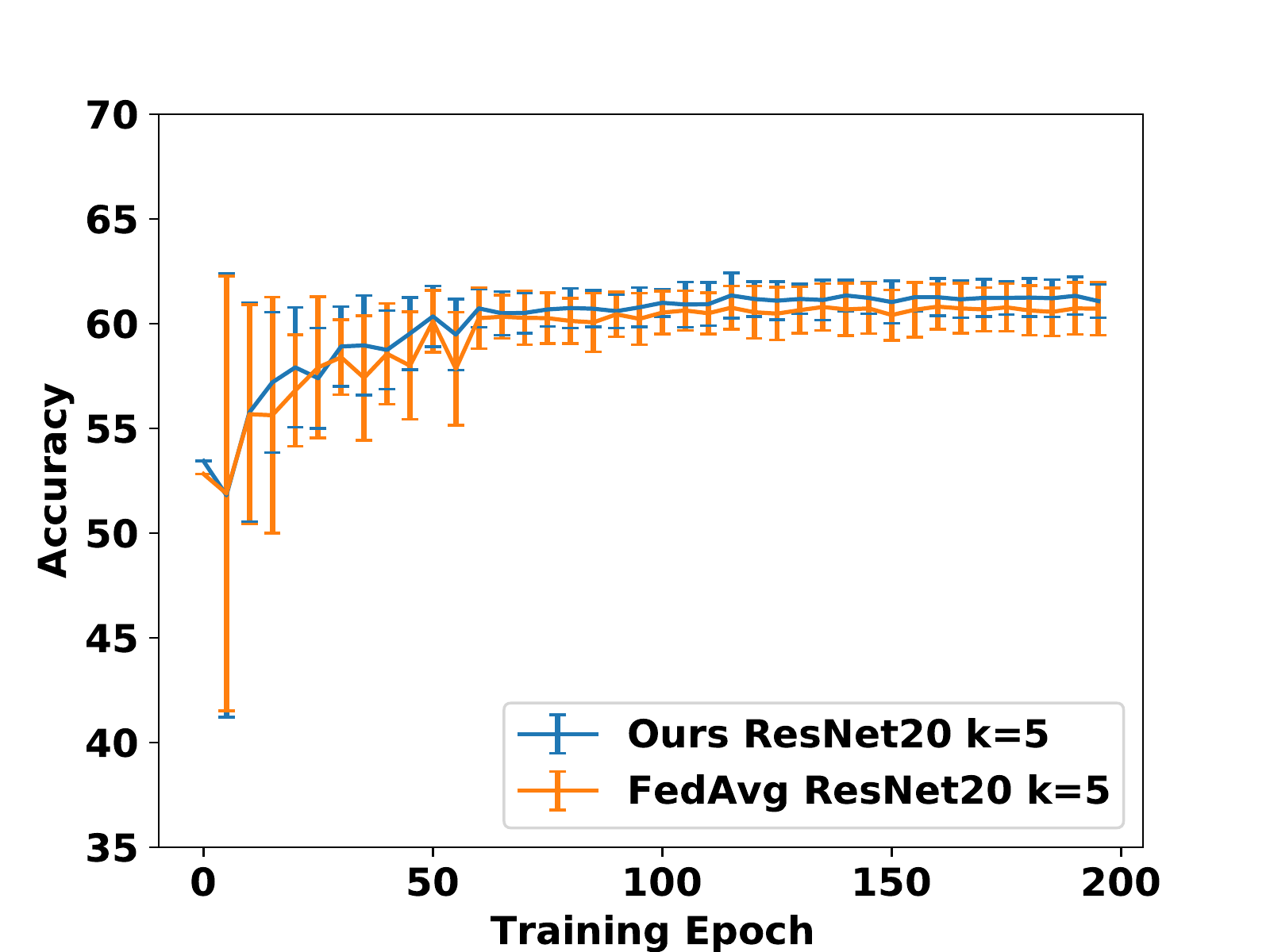}
\end{minipage}
\begin{minipage}[t]{0.3\textwidth}
\centering
\includegraphics[width=4.5cm]{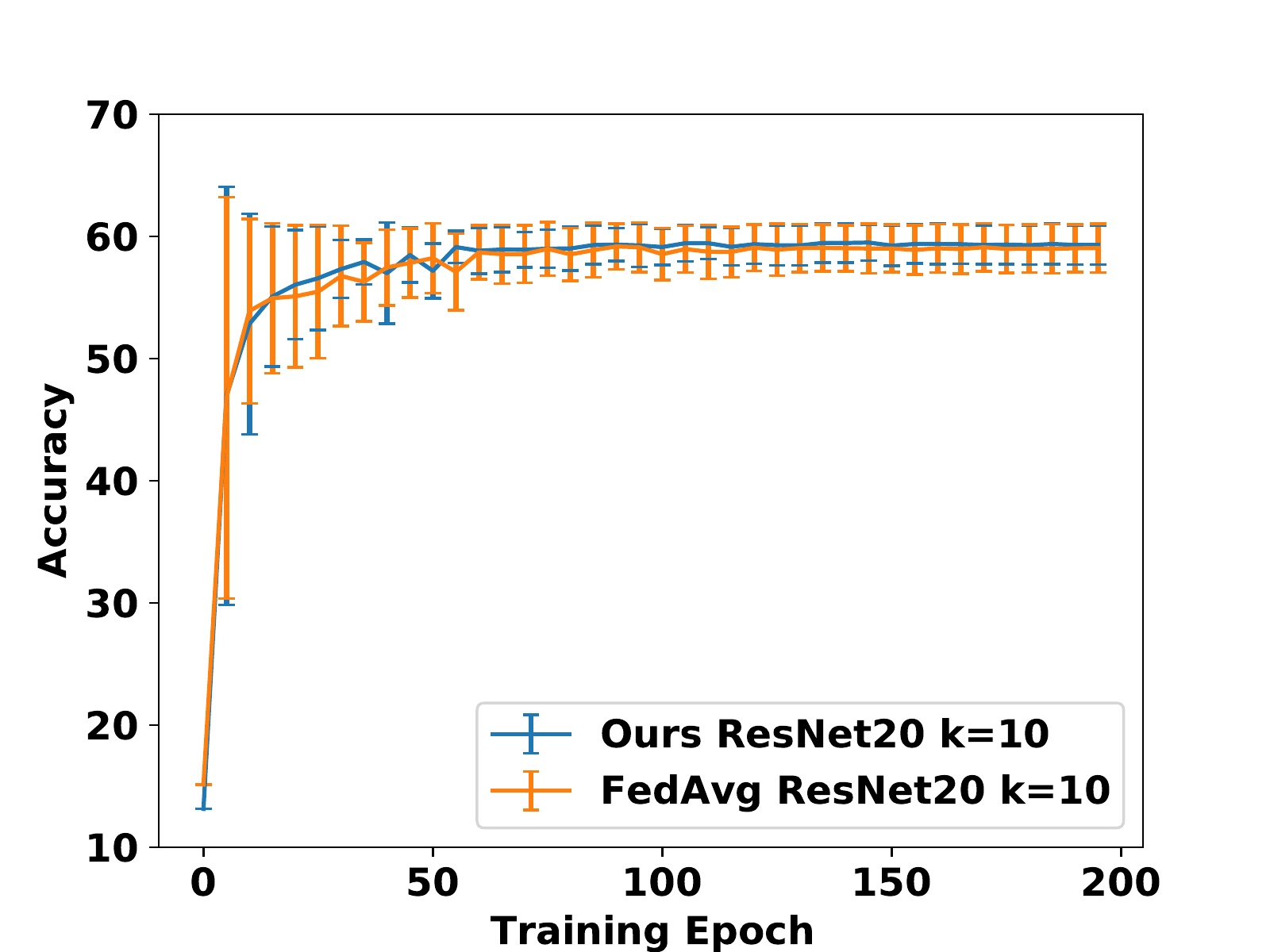}
\end{minipage}
\centering
\begin{minipage}[t]{0.3\textwidth}
\centering
\includegraphics[width=4.5cm]{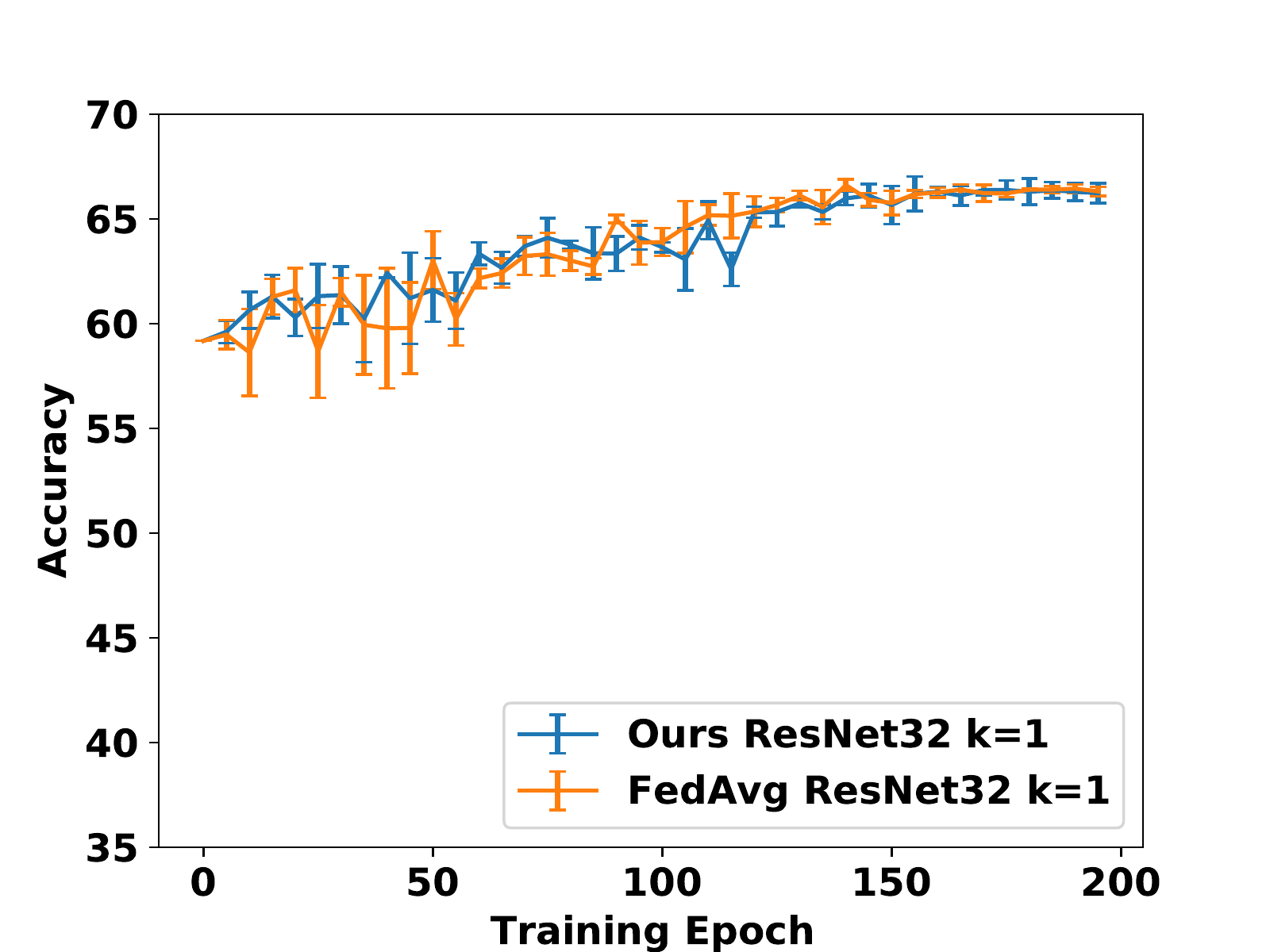}
\end{minipage}
\begin{minipage}[t]{0.3\textwidth}
\centering
\includegraphics[width=4.5cm]{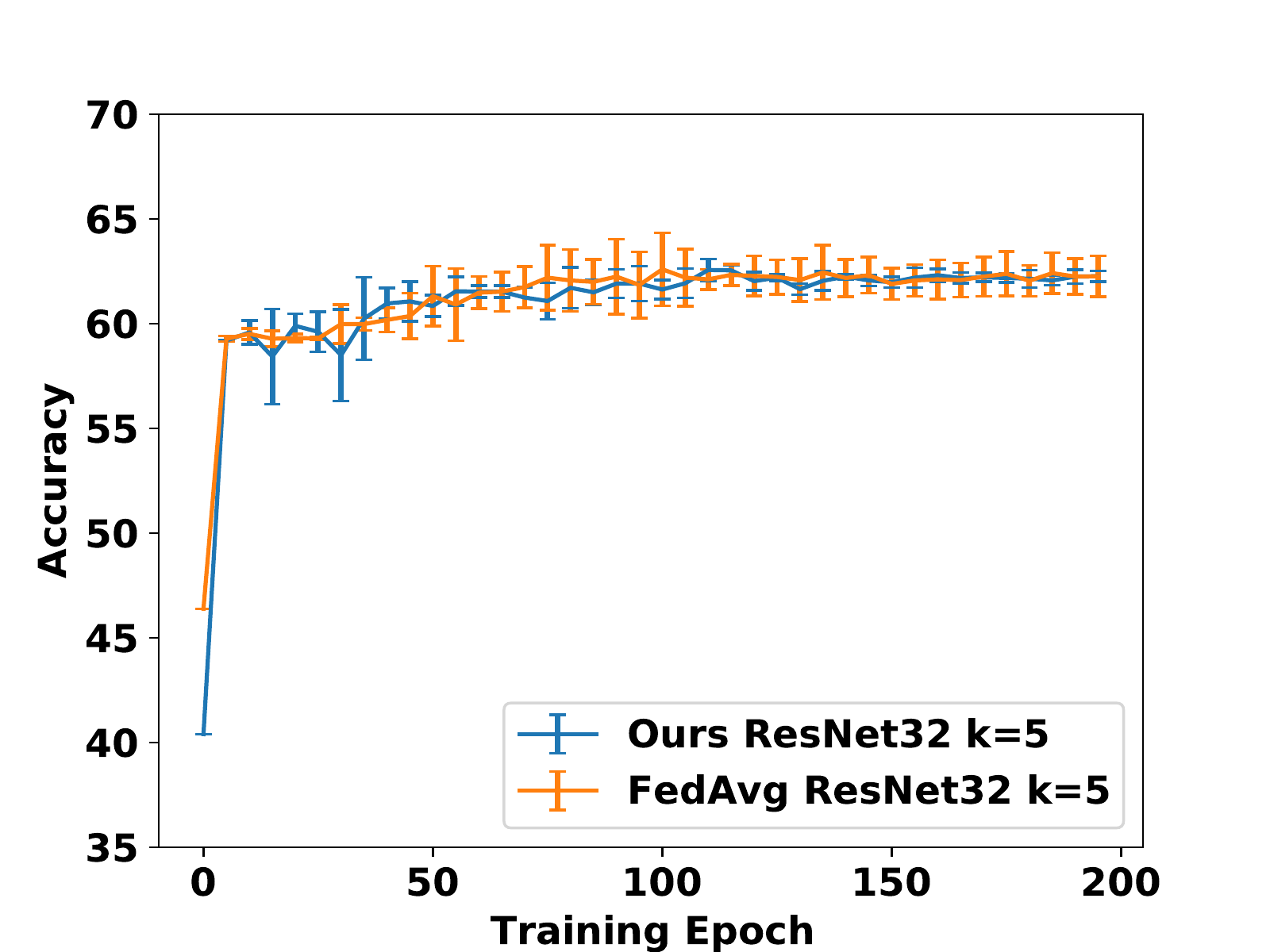}
\end{minipage}
\begin{minipage}[t]{0.3\textwidth}
\centering
\includegraphics[width=4.5cm]{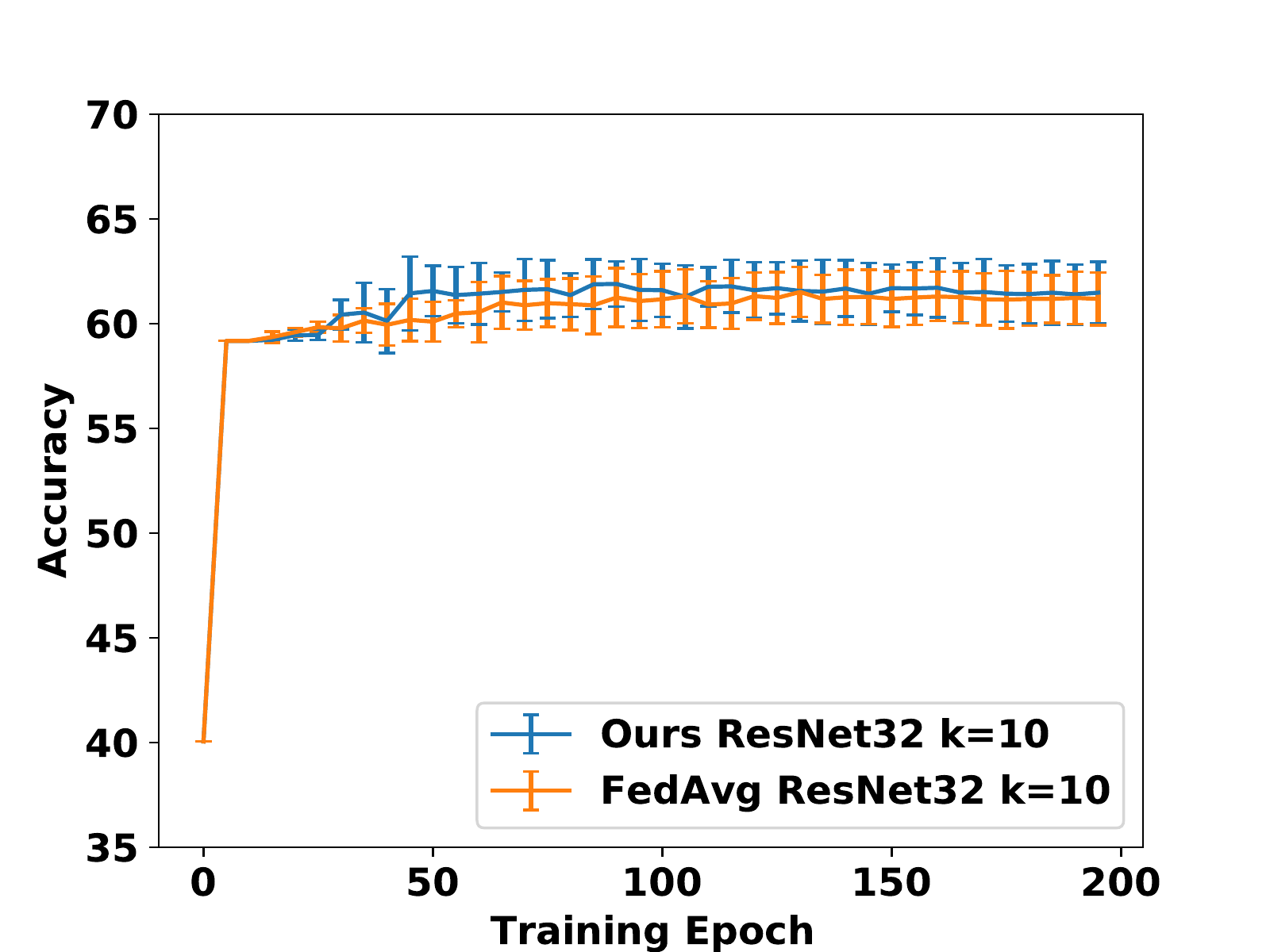}
\end{minipage}
\caption{Training curves for our method and FedAvg. The first and second rows demonstrate the results for ResNet20 and ResNet32 with $1$, $5$ and $10$ clients respectively. The error bars stand for standard deviation and are shown every $5$ epochs. Each experiment is repeated $5$ times. Please note that it is best to view in color.}
\label{fig:additionalResult}
\end{figure*}

\subsection{Experiments against Privacy Attacks}
In this section, we evaluate the performance of different defenses against state-of-the-art privacy attacks, i.e., reconstruction attacks and membership inference attacks, launched by curious clients. 


\subsubsection{Defense against membership inference Attacks}
In this part, we launch the membership inference attack that attempts to identify whether a data record is used during the training phase, against the FL training approaches. 

\noindent\textbf{Experiment setup:}
We implement the attacks based on \cite{NasrSH19}, which aims to tell if a certain data record was part of training set with a surrogate attack model. The attack model takes the final prediction or intermediate output of target models as input and the output has two classes ``\textit{Member}'' and ``\textit{Non-member}''. Following the settings in \cite{NasrSH19}, we assume the attacker has access to a fraction of (i.e., $50\%$) the training set and some non-member samples. In this case, to balance the training, we select half of each batch to include member instances and the other half non-member instances from the attacker's background knowledge, which prevents the attack model from a bias towards member or non-member instances.

 We initialized the weights using a normal distribution with mean $0$ and standard deviation $0.01$. The bias values of all layers are initialized with $0$. The batch size of all experiments is $64$. To train the attack model, we use the Adam optimizer with a learning rate of $0.0001$. We train attack models for $100$ epochs and pick the model with the highest testing accuracy, across all the $100$ epochs. We conduct experiments on {LDC} and CIFAR-10 dataset with ResNet56 models. We consider the most strict case where the number of clients is set as \textbf{2}. For a comprehensive comparison, we report the test accuracy and success rate for membership inference attacks by using the output of the last three layers. 

\noindent\textbf{Experiment result:} Tab. \ref{tab:best_white_models} shows the comparison results about membership inference attacks in our method, FedAvg \cite{McMahanMRHA17},  PPDL\cite{ShokriS15}, SPN \cite{FanNJZLCY20} and DBCL \cite{Wangshusen2019}. From the table, we can see that our method achieves the best learning accuracy while maintaining the strongest defence ability compared with other methods. More specifically, our method achieves attack success rate of around $50\%$, which means the clients can not infer any information from model outputs except blind guessing. Note that for the membership inference attack that infers a particular record is either a ``member'' or a ``non-member'', the worst attack success rate is $50\%$ (i.e., blind guessing). However, the attack success rate of other defence methods is around $60\%$, which means that clients have a certain advantage to attack successfully compared with blind guessing. 


\begin{table*}[htb] 
\centering
    \caption{\small The tracing attack results on LDC and CIFAR-10 dataset. We report the accuracy and attack success rate (ASR). We consider the cases where the malicious client launches attack using output of each of the last three layers.
    }
    \scalebox{0.9}{
    \begin{tabular}{ccccccc}
        \toprule
        & \textbf{Method} & \textbf{FedAvg}  & \textbf{PPDL} & \textbf{DBCL} & \textbf{SPN} & \textbf{Ours} \\
        \midrule
        \multirow{5}{*}{LDC} & \textbf{Accuracy} & $72.79$ & $67.34$ & $67.83$ & $69.31$ & $72.83$\\
        & \textbf{ASR (Last Layer)} & $70.21$ & 59.53& 64.75& 62.37 & $50.24$\\
        & \textbf{ASR (Second to Last)} & $68.50$ & 59.23& 53.44& 62.02& $49.96$ \\
        & \textbf{ASR (Third to Last)} & $66.45$ & 58.73& 52.71& 61.50& $49.87$ \\
        & \textbf{ASR (Last three)} & $72.33$ & 61.84 & 64.25& 62.41& $50.02$ \\
        \midrule
        \multirow{5}{*}{CIFAR10} & \textbf{Accuracy} & $93.18$ & $89.32$ & $92.05$ & $89.72$ &  $93.20$\\
        & \textbf{ASR (Last Layer)} & $77.31$ & 61.33& 66.44& 63.95& $50.24$\\
        & \textbf{ASR (Second to Last)} & $75.17$ & 60.59& 52.71& 61.72& $50.13$ \\
        & \textbf{ASR (Third to Last)} & $76.48$ & 60.83 & 51.62& 61.25& $49.83$ \\
        & \textbf{ASR (Last three)} & $77.56$ & 61.72 & 65.32& 63.80& $49.93$ \\
        \bottomrule
    \end{tabular}
    }
    \label{tab:best_white_models}
\end{table*}

\subsubsection{Defense against Reconstruction Attacks}
In this section, we launch two types of reconstruction attacks. On the one hand, recent research, dubbed GMI \cite{ZhangJP0LS20}, found that the private images from the training set of a certain model can be easily reconstructed with the help of a generative model (i.e. GAN \cite{ArjovskyCB17}). Thus, we first perform the experiments to show the defence ability of the attack that reconstruct the training images of certain category with the model output. On the other hand, Zhu et. al. \cite{ZhuLH19} presented the reconstruct attack, dubbed DLG, to recover training data from the local gradient information in FL. Thus, we also perform experiments to show the defence ability of the attack that reconstruct the training data from the local gradients. 

\noindent\textbf{Experiment setup: }
We evaluate the defensive ability of existing defence methods against GMI on LDC and CIFAR-10 datasets. Specifically, following \cite{ZhangJP0LS20}, we evenly split the training set based on the labels, and use one half as private set and the remainder as public set (the labels in the public set and private set have no overlaps). We evaluate the defence ability under the most strict setting: the attacker does not have any auxiliary knowledge about the private image, in which case he/she will recover the image from scratch. Similar to \cite{ZhangJP0LS20}, we adopt the following three metrics for evaluation: 1) Attack success rate (ASR): the prediction accuracy of the attacked model over the \textbf{reconstructed images}; 2) Feature Distance (Feat Dist): the feature distance between the reconstructed image and the centroid of the target class. The feature space is taken to be the output of the penultimate layer of the attacked model; and  3) K-Nearest Neighbor Distance (KNN Dist): We select the closest data point to the reconstructed image in the training set and the KNN Dist is set as their $l_2$ distance in the feature space. 

\noindent\textbf{Experiment results: }
For the defence performance of the reconstruction attack against the GMI, we present the comparison results of different defenses in Tab \ref{dp}. From the table, we can see that our method achieves dominant advantages in all three metrics compared with other related methods, i.e., FedAvg \cite{McMahanMRHA17},  PPDL\cite{ShokriS15}, SPN \cite{FanNJZLCY20} and DBCL \cite{Wangshusen2019}. Specifically, the ASR of our method is around $10\%$ for both LDC and CIFAR-10 datasets, which is almost the optimal defensive ability. Note that the number of categories in both two datasets is $10$, and thus the probability of random guessing the prediction is $10\%$. However, the ASR of the non-defensive method (i.e., the FedAvg \cite{McMahanMRHA17}) for the LDC and CIFAR-10 are $60\%$ and $68\%$, respectively. The ASR of other defensive methods (i.e., PPDL\cite{ShokriS15}, SPN \cite{FanNJZLCY20} and DBCL \cite{Wangshusen2019}) are around $50\%$, which demonstrates a relative poor defensive ability. Besides, we also provide some visualization of the reconstructed images from CIFAR-10 in Fig. \ref{recons}, where the reconstructed results of our method leaks almost zero information about the training dataset.

For the defence performance of the reconstruction attack against the DLG, we train ResNet20 models in CIFAR-100 on $5$ clients with our method. Besides, since the work \cite{ZhuLH19} suggests adopting the differential privacy technique to defend the DLG attack, we take the PPDL\cite{ShokriS15} as a comparison. In the PPDL\cite{ShokriS15}, the variance of added Gaussian noises are ranging from $10^{-4}$ to $10^{-1}$), respectively. Fig. \ref{recons_gradient} gives the visualization of reconstructed result on a randomly sampled image from the testset of CIFAR-100. From the figure, we can see that our method has the same level of defensive ability as the PPDL with the highest level of noise ($10^{-1}$).

\begin{table}[h]
\caption{Comparison of different defense methods against reconstruction attack.}\label{dp}
\centering
\resizebox{0.95\columnwidth}{!}{
\begin{threeparttable}
\begin{tabular}{ccccccc}
\toprule

               &         &   FedAvg   & PPDL & DBCL & SPN & Ours \\
\midrule
\multirow{4}{*}{LDC} 
    & Accuracy &   72.79   & 67.34  & 67.83 & 69.31 & 72.83    \\
& KNN Dist    & 682.23  & 1386.48   & 862.51  & 804.39 & 2248.56    \\
& Feat Dist    & 736.47   & 1683.99   & 1052.85  & 879.13 & 2203.71    \\
& ASR    & 60.22   & 33.70   & 51.66  & 55.37 & 11.49    \\
\midrule
\multirow{4}{*}{CIFAR10} 
    & Accuracy &   93.18   & 89.32  & 92.05 & 89.72 & 93.20    \\
& KNN Dist    & 364.71  & 781.43   & 662.82  & 721.66 & 1866.47    \\
& Feat Dist    & 405.61   & 723.90   & 703.98  & 768.36 & 1953.94    \\
& ASR    & 68.41   & 49.62   & 58.34  & 52.15 & 10.63    \\
\bottomrule
\end{tabular}

  \scriptsize
        \begin{tablenotes}
        \item ASR: the prediction accuracy of the attacked model over the reconstructed images. Feat Dist: the feature distance between the reconstructed image and the centroid of the target class. KNN Dist: the $l_2$ distance of 
        the closest data point to the reconstructed image. 
        \end{tablenotes}
\end{threeparttable}
}
\end{table}

\begin{figure}[htb] 
\centering 
\includegraphics[width=0.9\linewidth]{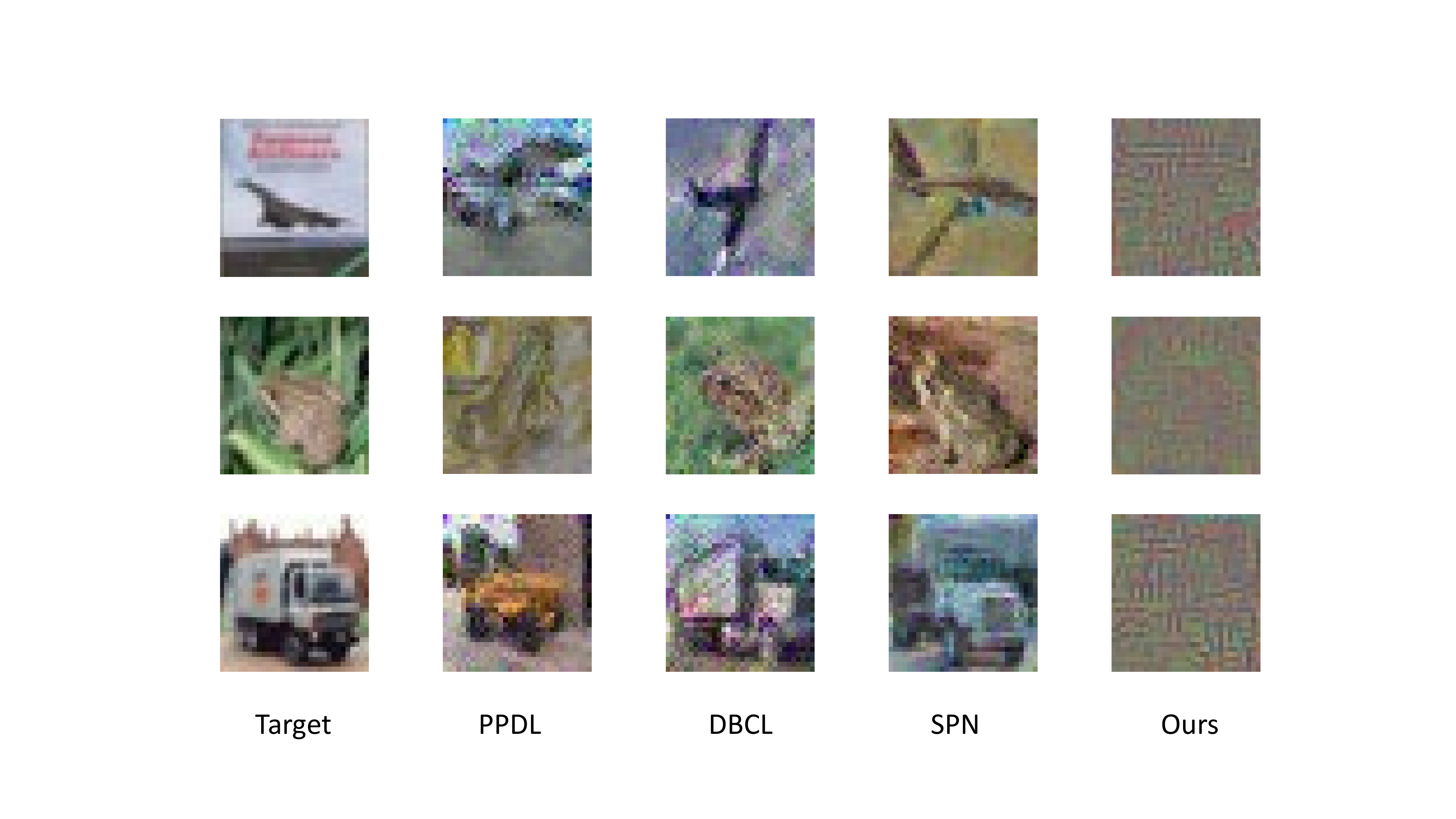} 
\vskip -0.25in
\caption{Reconstructed results of GMI against different defenses.} 
\label{recons}
\vskip -0.15in
\end{figure}

\begin{figure}[htb] 
\centering 
\includegraphics[width=0.9\linewidth]{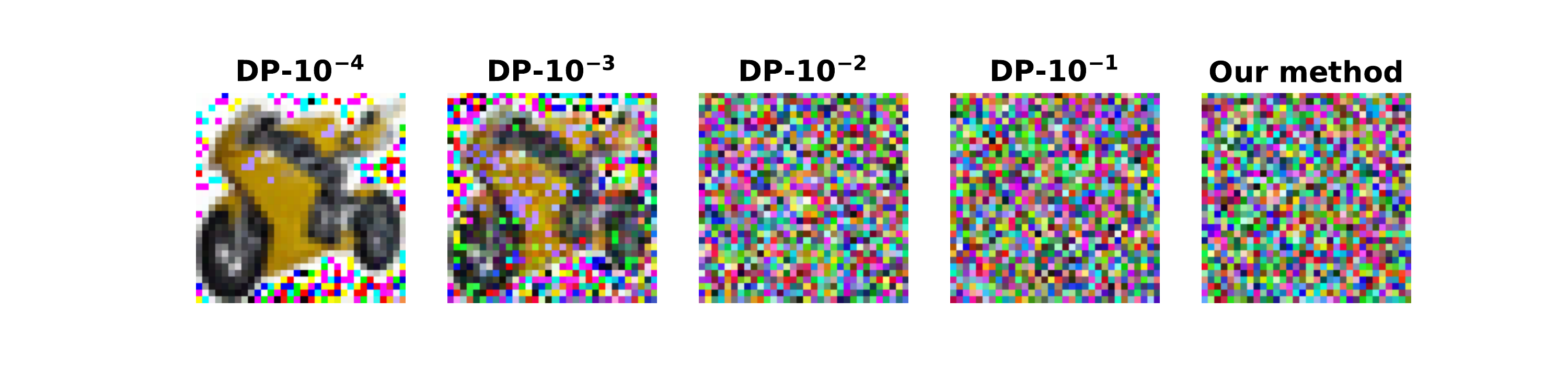} 
\vskip -0.25in
\caption{Reconstructed results of DLG against different defenses.} 
\label{recons_gradient}
\vskip -0.15in
\end{figure}

\section{Related work}\label{sec:related}
Federated learning (FL) was formally introduced by Google in 2016 \cite{KonecnyMYRSB16} to address data privacy in machine learning. However, recent works \cite{ZhuLH19, NasrSH19} proved that the original FL schemes still face the risk of privacy leakage. Specifically, two types of inference attacks called reconstruction and membership inference attacks have been widely leveraged to obtain local training data of clients. To defend these two privacy attacks, some compression methods such as selective gradients sharing \cite{ShokriS15}, reducing dimensionality \cite{abs-1912-11464} and dropout \cite{WagerWL13} were presented to decrease information disclosure, thereby defending against privacy attacks. However, these methods were proved to be almost ineffective for the defence ability or had a negative impact on the quality of the FL \cite{FanNJZLCY20}. After that, due to the high efficiency and easy integration, differential privacy technique \cite{Dwork06} was widely adopted in FL to improve the defence ability \cite{AbadiCGMMT016, McMahanRT018, PathakRR10, WeiLDMYFJQP20}. For example, \cite{AbadiCGMMT016} demonstrated how to maintain data privacy by adding Gaussian noise to shared gradients during the training of deep neural network. \cite{ZhuLH19} also suggested adding Gaussian noise to shared gradients to defend reconstruction attacks. However, differential privacy-based methods sacrifice model accuracy in exchange for privacy \cite{XuLL0L20}, which is not suitable in some FL applications requiring high model accuracy \cite{abs-1912-04977, ZhangLX00020}. Following the idea of differential privacy that injects random noises to perturb the shared parameters, the matrix sketching method \cite{Mahoney11, ClarksonW13} was considered in FL to perturb the global model parameters. Besides, Fan et. al. \cite{FanNJZLCY20} leveraged element-wise adaptive gradients perturbations to defend reconstruction and membership inference attacks. 

Unfortunately, these noises perturbation methods cannot achieve high learning accuracy and strong defence ability for reconstruction and membership inference attacks at the same time. Thus, in this paper, we try to design a novelty noise perturbation method that can achieve strong defence ability without sacrificing the learning accuracy. Note that there are other  cryptography technologies, such as secure multi-party  computation (MPC) \cite{WangCL17} and homomorphic encryption (HE) \cite{Brakerski12} were proposed to address privacy risks in FL \cite{YangLCT19, LiuCV18, abs-1901-08755, PhongAHWM18}. These schemes can provide strong privacy protection, but often incur significantly more demanding computational and communication cost while cannot be efficiently implemented in practice \cite{FanNJZLCY20, abs-1912-04977, ZhangLX00020}. In this paper, our work is similar to the differential privacy, and thus does not consider the MPC and HE based schemes therein and refer readers to \cite{TanuwidjajaCK19, YangLCT19}.

\section{Conclusion}\label{sec:conclusion}
In this paper, we have presented an efficient model perturbation method in federated learning to defense the state-of-the-art privacy attacks launched by honest-but-curious clients. Specifically, the server perturbs the global model parameters by adding random selected noises before broadcasting them to clients, which prevents clients from obtaining true model parameters including the true local gradients. Therefore, it can defense two famous privacy attacks, i.e., reconstruction attack and membership inference attack. Meanwhile, unlike the differential privacy that cannot remove the added random noises, the proposed method ensures that the server can remove the added random noises from the aggregated local gradients, and thus would not reduce the learning accuracy. Extensive experiments about both regression and classification tasks demonstrate that the model accuracy of our scheme are almost the same as the plain training, and better than the state-of-the-art privacy defense schemes. Besides, extensive experiments about both reconstruction and membership inference attacks show that the defense ability of our scheme significantly outperforms than the state-of-the-art privacy defense schemes. 


\bibliographystyle{IEEEtranS}
\bibliography{example_paper}


\appendices
\section{Proof of Theorem \ref{theorem:output}}\label{proof_theory0}

Based on Eq. \eqref{eq:para_cons1}, we can deduce that
\begin{equation*}
\left\{
\begin{aligned}
&R^{(1)}= D_{\bm r^{(1)}}E^{(1)} \textnormal{ and } R^{(L)} D_{\bm r^{(L-1)}}=E^{(L)},\\
&R^{(l)}D_{\bm r^{(l - 1)}}=  D_{\bm r^{(l)}}E^{(l)}, \textnormal{ for }2\leq l\leq L-1,
\end{aligned}\right.
\end{equation*}
 where $D_{\bm r^{(1)}}$ is the $n_l \times n_{l}$ diagonal matrix whose main diagonal is $\bm r^{(1)}$ and $E^{(l)}$ is the $n_l \times n_{l-1}$ matrix whose entries are all 1s, for $l=1,2, \dots, L$.
 Next, we first prove Eq. \eqref{eq:l_out} by mathematical induction. Specifically, when $l=1$, we can obtain
\begin{small}
\begin{align*}
    \hat {\bm y}^{(1)} &= ReLU\left(\widehat W^{(1)} \bm x\right)=  ReLU\left(\big(R^{(1)} \circ W^{(1)}\big) \bm x\right) \\
    &=ReLU\left(\big(D_{\bm r^{(1)}}E^{(1)} \circ W^{(1)}\big) \bm x\right)\\
    &\stackrel{(a)}=ReLU\left(D_{\bm r^{(1)}}\big(E^{(1)} \circ W^{(1)}\big) \bm x\right)\\
    &=  ReLU\left(D_{\bm r^{(1)}} W^{(1)} \bm x\right)\stackrel{(b)}=  ReLU\left(\bm r^{(1)}\circ (W^{(1)} \bm x)\right)\\
    & \stackrel{(c)}=\bm r^{(1)}\circ ReLU( W^{(1)} \bm x)
    =\bm r^{(1)} \circ \bm y^{(1)},
\end{align*}
\end{small}
where the above equations $(a)$ and $(b)$ follow from the properties of Hadamard product (See Definition \ref{hadamard}), and the equation $(c)$ follows from $\bm r^{(1)} \in R_{>0}^{n_1}$.
Then, for $2 \leq l \leq L-1$, assuming $\hat {\bm y}^{(l - 1)}= \bm r^{(l - 1)} \circ \bm y^{(l - 1)}$ by induction. Then, we have
\begin{small}
\begin{align*}
    \hat {\bm y}^{(l)} &= ReLU\left(\widehat W^{(l)} \hat {\bm y}^{(l - 1)}\right)\\
    &= ReLU\left(\big(R^{(l)} \circ W^{(l)}\big) \big(\bm r^{(l - 1)} \circ \bm y^{(l - 1)}\big)\right) \\
    &=ReLU\left(\big(R^{(l)} \circ W^{(l)}\big) D_{\bm r^{(l - 1)}}  \bm y^{(l - 1)}\right)\\
    &=ReLU\left(\big( (R^{(l)}D_{\bm r^{(l - 1)}}) \circ W^{(l)}\big) \bm y^{(l - 1)}\right) \\
    &=ReLU\left(\big( D_{\bm r^{(l )}}E^{(l)} \circ W^{(l)} \big) \bm y^{(l - 1)}\right)\\
   &=ReLU\left(\big( D_{\bm r^{(l )}}(E^{(l)} \circ W^{(l)}) \big) \bm y^{(l - 1)}\right)\\
   &=ReLU\left( D_{\bm r^{(l )}}W^{(l)} \bm y^{(l - 1)}\right)= ReLU\left(\bm r^{(l )} \circ (W^{(l)} \bm y^{(l - 1)})\right)\\
   & = \bm r^{(l )} \circ ReLU\left(W^{(l)} \bm y^{(l - 1)}\right)=\bm r^{(l)} \circ \bm y^{(l)}.
\end{align*}
\end{small}
Then, we prove Eq. \eqref{eq:final_out} as follows.
\begin{small}
\begin{align}
\nonumber    \hat {\bm y}^{(L)} &= \widehat W^{(L)} \hat {\bm y}^{(L - 1)} = (R^{(L)} \circ W^{(L-1)} + R^{(a)}) \hat {\bm y}^{(L-1)} \\
\nonumber    &= (R^{(L)} \circ W^{(L-1)})(\bm r^{(L - 1)} \circ \bm y^{(L - 1)}) + R^{(a)}\hat {\bm y}^{(L-1)}  \\
\nonumber    &= (R^{(L)} D_{\bm r^{(L-1)}}) \circ W^{(L)} \bm y^{(L - 1)}) + R^{(a)}\hat {\bm y}^{(L-1)}\\
\nonumber    &=  W^{(L)} \bm y^{(L - 1)} + R^{(a)}\hat {\bm y}^{(L-1)}= \bm y^{(L)} + \alpha \bm \gamma \circ \bm r^{(a)}\\
\nonumber    &= \bm y^{(L)} +  \alpha \bm{r}.
\end{align}
\end{small}

\section{Proof of Theorem \ref{gradient}}\label{proof_Theory1}
Before giving the proof, we recall some notations about the derivatives of vector-valued functions. Specifically, for any vectors $\bm f \in \mathbb{R}^{m}$ and $\bm x \in \mathbb{R}^{n}$, the partial derivative $\frac{\partial \bm f}{\partial \bm x}$ of $\bm f$ with respect to $\bm x$ is an $m \times n$ matrix, whose $(i,j)$-th entry is given as 
\[\Big(\frac{\partial \bm f}{\partial \bm x}\Big)_{ij}=\frac{\partial f_i}{\partial x_j}.\]
Moreover, when $\bm x$ is a $u \times v$ matrix, we can regard $\bm x$ as a vector of $\mathbb{R}^{uv}$, then $\frac{\partial \bm f}{\partial \bm x}$ is also well-defined.

According to Theorem \ref{theorem:output}, the prediction is computed as $\bm{\hat{y}}^{(L)} =\bm y^{(L)} +\alpha \bm{r}$, and thus the perturbed loss function shown in Eq. \eqref{noisy loss function} is deduced as 
\[
\mathcal{\widehat{L}}\left(\widehat{W}; (\bm{x},\bm{\bar{y}})\right)=\frac{1}{2}\parallel\bm y^{(L)} +\alpha \bm{r}- \bar{\bm y}\parallel^{2}_{2}.
\]
Then, the derivative of the loss with respect to the prediction is 
\begin{small}
\[\frac{\partial \mathcal{\widehat L}\left(\widehat{W}; (\bm{x},\bm{\bar{y}})\right)}{\partial \hat {\bm y}^{(L)}}=\left(\hat {\bm y}^{(L)}-\bar{\bm y}\right)^{T}=\frac{\partial \mathcal{ L}\left(W; (\bm{x},\bm{\bar{y}})\right)}{\partial {\bm y}^{(L)}}+ \alpha \bm r^{T},\]
\end{small}
By the chain rule, we can derive that
\begin{small}
\begin{eqnarray*}
 \frac{\partial \mathcal{\widehat  L}(\widehat{W}; (\bm{x},\bm{\bar{y}}))}{\partial \widehat {W}^{(l)}}
 &=&\frac{\partial \mathcal{\widehat L}(\widehat{W}; (\bm{x},\bm{\bar{y}}))}{\partial \hat {\bm y}^{(L)}}
  \frac{\partial \hat {\bm y}^{(L)}}{\partial  \widehat {W}^{(l)}}\\
 & =&\left(\frac{\partial \mathcal{L}(W; (\bm{x},\bm{\bar{y}}))}{\partial {\bm y}^{(L)}} +  \alpha \bm r^{T} \right)
  \frac{\partial \hat {\bm y}^{(L)}}{\partial  \widehat {W}^{(l)}} \\
    &=&\frac{\partial \mathcal{L}(W; (\bm{x},\bm{\bar{y}}))}{\partial {\bm y}^{(L)}} \frac{\partial \hat {\bm y}^{(L)}}{\partial  \widehat {W}^{(l)}}+  \alpha \bm r^{T}
  \frac{\partial \hat {\bm y}^{(L)}}{\partial  \widehat {W}^{(l)}} \\
  &=&\frac{\partial \mathcal{L}(W; (\bm{x},\bm{\bar{y}}))}{\partial {\bm y}^{(L)}}\left( \frac{\partial  {\bm y}^{(L)}}{\partial  \widehat {W}^{(l)}}+\frac{\partial (\alpha \bm{r} )}{\partial \widehat {W}^{(l)}}\right)\\
  &&+  \alpha \bm r^{T}
  \frac{\partial \hat {\bm y}^{(L)}}{\partial  \widehat {W}^{(l)}} \\
  &=& \frac{\partial \mathcal{L}(W; (\bm{x},\bm{\bar{y}}))}{\partial \widehat {W}^{(l)}} +\alpha \bm r^{T}
  \frac{\partial \hat {\bm y}^{(L)}}{\partial  \widehat {W}^{(l)}}\\
  &&+
  \left(\frac{\partial \mathcal{\widehat L}(\widehat{W}; (\bm{x},\bm{\bar{y}}))}{\partial \hat{{\bm y}}^{( L)}} -  \alpha \bm r^{T}\right)\frac{\partial (\alpha \bm r )}{\partial \widehat {W}^{(l)}}\\
  &\stackrel{(d)}=& \frac{\partial \mathcal{L}(W; (\bm{x},\bm{\bar{y}}))}{\partial \widehat {W}^{(l)}} 
  - \upsilon \alpha \frac{\partial  \alpha}{\partial \widehat {W}^{(l)}} +\\
  &&\bm r^{T} \left( \alpha\frac{\partial \hat {\bm y}^{(L)}}{\partial \widehat {W}^{(l)}}
  + \Big(\frac{\partial \mathcal{\widehat L}(\widehat{W}; (\bm{x},\bm{\bar{y}}))}{\partial \hat{\bm y}^{(L)}}\Big)^{T}\frac{\partial  \alpha}{\partial \widehat {W}^{(l)}}\right) \\
  &=& \frac{1}{R^{(l)}} \circ \frac{\partial \mathcal{L}(W; (\bm{x},\bm{\bar{y}}))}{\partial {W}^{(l)}} +
  \bm{r}^{T}  \bm \sigma^{(l)} - \upsilon \bm \beta^{(l)},
\end{eqnarray*}
\end{small}
where $(d)$ follows from the fact that $\frac{\partial \mathcal{\widehat L}(\widehat{W}; (\bm{x},\bm{\bar{y}}))}{\partial \hat{\bm y}^{(L)}} \bm r=\bm r^{T}\left(\frac{\partial \mathcal{\widehat L}(\widehat{W}; (\bm{x},\bm{\bar{y}}))}{\partial \hat{\bm y}^{(L)}}\right)^{T} \in \mathbb{R}$.

\section{Proof of Theorem \ref{modelupdate}}\label{proof_Theory2}
According to Theorem \ref{gradient}, we can derive that
\begin{small}
\begin{eqnarray*}
\nabla F(\widehat{W}^{(l)}, \mathcal{D}_{k})&=&\frac{1}{|\mathcal{D}_{k}|}\sum_{(\bm{x}_{i}, \bm{\bar{y}}_{i})\in \mathcal{D}_{k}}\frac{\partial \mathcal{\widehat{L}}\left(\widehat{W}; (\bm{x}_{i}, \bm{\bar{y}}_{i})\right)}{\partial \widehat{W}^{(l)}} \\
&=&\frac{1}{|\mathcal{D}_{k}|}\sum_{(\bm{x}_{i}, \bm{\bar{y}}_{i})\in \mathcal{D}_{k}}\bigg(\frac{1}{R^{(l)}} \circ \frac{\partial \mathcal{L}\left(W; (\bm{x}_{i}, \bm{\bar{y}}_{i})\right)}{\partial {W}^{(l)}} \\
&&+\bm{r}^{T} \bm \sigma^{(l)}_{(\bm{x}_{i}, \bm{\bar{y}}_{i})} - \upsilon \bm \beta^{(l)}_{(\bm{x}_{i}, \bm{\bar{y}}_{i})}\bigg)\\
  &=& \frac{1}{R^{(l)}} \circ \nabla F(W^{(l)}, \mathcal{D}_{k})+\bm{r}^{T} \bm \sigma^{(l)}_{k}-\upsilon \bm \beta^{(l)}_{k}.
\end{eqnarray*}
\end{small}
Thus, we can obtain that 
\begin{small}
\begin{eqnarray*}
\nabla F(W^{(l)})&=& R^{(l)}\circ\left(\nabla F(\widehat{W}^{(l)})-\left(\sum_{s=1}^{m}\bm\gamma_{I_s}\bm \sigma^{(l)}_{s}\right)+\upsilon \bm{ \beta}^{(l)}\right)\\
& \stackrel{(e)}= &  R^{(l)}\circ \sum^{K}_{k=1}\frac{|\mathcal{D}_{k}|}{|\mathcal{D}|}\left(\nabla F(\widehat{W}^{(l)}, \mathcal{D}_{k})-\bm{r}^{T}\bm \sigma_{k}^{(l)}+\upsilon\bm{\beta}_{k}^{(l)}\right)\\
&=&R^{(l)}\circ \sum^{K}_{k=1}\frac{|\mathcal{D}_{k}|}{|\mathcal{D}|}\Bigg(\frac{1}{R^{(l)}} \circ \nabla F(W^{(l)}, \mathcal{D}_{k})+\bm{r}^{T} \bm \sigma^{(l)}_{k}\\
&&-\upsilon \bm \beta^{(l)}_{k}-\bm{r}^{T}\bm \sigma_{k}^{(l)}+\upsilon\bm{\beta}_{k}^{(l)}
\Bigg)\\
&=&R^{(l)}\circ\frac{1}{R^{(l)}} \circ \sum^{K}_{k=1}\frac{|\mathcal{D}_{k}|}{|\mathcal{D}|} \nabla F(W^{(l)}, \mathcal{D}_{k}) \\
 &=& \sum^{K}_{k=1}\frac{|\mathcal{D}_{k}|}{|\mathcal{D}|}\nabla F(W^{(l)}, \mathcal{D}_{k}),
\end{eqnarray*}
\end{small}
where the equation $(e)$ satisfies the following deduction
\begin{small}
\begin{eqnarray*}
  \sum_{s=1}^{m}\bm\gamma_{I_s}\widetilde{\bm \sigma}^{(l)}_{s} &=&  \sum_{s=1}^{m}\bm\gamma_{I_s}\left(\sum^{K}_{k=1}\frac{|\mathcal{D}_{k}|}{|\mathcal{D}|}\widetilde{\bm \sigma}^{(l)}_{k,s}\right) \\
   &=&  \sum^{K}_{k=1}\frac{|\mathcal{D}_{k}|}{|\mathcal{D}|}\left(\sum_{s=1}^{m}\bm\gamma_{I_s}(\bm r^{(a)}_{|I_s})^{T}\bm{\widehat{\sigma}}^{(l)}_{{k}_{| I_s}}\right) \\
  &=&  \sum^{K}_{k=1}\frac{|\mathcal{D}_{k}|}{|\mathcal{D}|}\bm r^{T}\widehat{\bm \sigma}^{(l)}_{k} =\sum^{K}_{k=1}\frac{|\mathcal{D}_{k}|}{|\mathcal{D}|}\bm r^{T}\bm \sigma_{k}^{(l)}.
\end{eqnarray*}
\end{small}



\end{document}